\author{
	Viet Anh Nguyen \qquad Soroosh Shafieezadeh-Abadeh\\
	\'Ecole Polytechnique F\'ed\'erale de Lausanne, Switzerland \\
	\texttt{ \{viet-anh.nguyen, soroosh.shafiee\}@epfl.ch } 
	\AND
	Man-Chung Yue \\
	The Hong Kong Polytechnic University, Hong Kong \\
	\texttt{manchung.yue@polyu.edu.hk} 
	\AND
	Daniel Kuhn \\
	\'Ecole Polytechnique F\'ed\'erale de Lausanne, Switzerland \\
	\texttt{ daniel.kuhn@epfl.ch } 
	\AND
	Wolfram Wiesemann \\
	Imperial College Business School, United Kingdom\\
	\texttt{ww@imperial.ac.uk} 
}
\theoremstyle{definition}
\newtheorem{theorem}{Theorem}[section]
\newtheorem{definition}[theorem]{Definition}
\newtheorem{assumption}[theorem]{Assumption}
\newtheorem{proposition}[theorem]{Proposition}
\newtheorem{corollary}[theorem]{Corollary}
\newtheorem{lemma}[theorem]{Lemma}
\newtheorem{remark}[theorem]{Remark}
\newtheorem{example}[theorem]{Example}
\def\munderbar#1{\underline{\sbox\tw@{$#1$}\dp\tw@\z@\box\tw@}}
\newcommand{\be}{\begin{equation}}
\newcommand{\ee}{\end{equation}}
\newcommand{\bea}{\begin{equation*}\begin{aligned}}
\newcommand{\eea}{\end{aligned}\end{equation*}}
\newcommand{\ds}{\displaystyle}
\newcommand{\R}{\mathbb{R}}
\newcommand{\Max}{\max\limits_}
\newcommand{\Min}{\min\limits_}
\newcommand{\Sup}{\sup\limits_}
\newcommand{\Inf}{\inf\limits_}
\newcommand{\wh}{\widehat}
\newcommand{\mc}{\mathcal}
\newcommand{\mbb}{\mathbb}
\newcommand{\m}{\mu}
\newcommand{\msa}{\wh{\m}}
\newcommand{\covsa}{\wh{\Sigma}} 
\DeclareMathOperator{\st}{s.t.}
\DeclareMathOperator{\supp}{supp}
\DeclareMathOperator{\KL}{KL}
\newcommand{\PD}{\mathbb{S}_{++}} 
\newcommand{\Let}{\triangleq}
\newcommand{\opt}{^\star}
\newcommand{\eps}{\varepsilon}
\newcommand{\X}{\mathcal{X}}
\newcommand{\Wass}{\mathds{W}}
\newcommand{\EE}{\mathds{E}}
\newcommand{\half}{\frac{1}{2}}
\newcommand{\B}{\mbb B}
\newcommand{\J}{\mc J}
\newcommand{\M}{\mc M}
\newcommand{\PP}{\mbb P}
\newcommand{\dualvar}{\gamma}
\title{Optimistic Distributionally Robust Optimization \\ for Nonparametric Likelihood Approximation}
\date{\today}
\begin{document}

\maketitle
\begin{abstract}
	The likelihood function is a fundamental component in Bayesian statistics. However, evaluating the likelihood of an observation is computationally intractable in many applications. In this paper, we propose a non-parametric approximation of the likelihood that identifies a probability measure which lies in the neighborhood of the nominal measure and that maximizes the probability of observing the given sample point. We show that when the neighborhood is constructed by the Kullback-Leibler divergence, by moment conditions or by the Wasserstein distance, then our \textit{optimistic likelihood} can be determined through the solution of a convex optimization problem, and it admits an analytical expression in particular cases. We also show that the posterior inference problem with our optimistic likelihood approximation enjoys strong theoretical performance guarantees, and it performs competitively in a probabilistic classification task.
\end{abstract}

\section{Introduction}

Bayesian statistics is a versatile mathematical framework for estimation and inference, which applications in bioinformatics \cite{baldi2001bioinformatics}, computational biology \cite{raj2014faststructure, sanguinetti2006probabilistic}, neuroscience \cite{woolrich2004multilevel}, natural language processing \cite{liang2007infinite, naseem2010using}, computer vision \cite{jojic2001learning, likas2004variational}, robotics \cite{cummins2008fab}, machine learning \cite{minka2001expectation, tipping2001sparse}, etc. A Bayesian inference model is composed of an unknown parameter $\theta$ from a known parameter space $\Theta$, an observed sample point $x$ from a sample space $\X \subseteq \R^m$, a likelihood measure (or conditional density) $p(\cdot | \theta)$ over $\X$ and a prior distribution $\pi(\cdot)$ over $\Theta$. The key objective of Bayesian statistics is the computation of the posterior distribution $p(\cdot | x)$ over $\Theta$ upon observing $x$. 

Unfortunately, computing the posterior is a challenging task in practice. Bayes' theorem, which relates the posterior to the prior~\cite[Theorem~1.31]{ref:schervish1995theory}, requires the evaluation of both the likelihood function $p( \cdot | \theta)$ and the evidence $p(x)$. Evaluating the likelihood $p( \cdot | \theta)$ at an observation $x \in \X$ is an intractable problem in many situations. For example, the statistical model may contain hidden variables $\zeta$, and the likelihood $p(x | \theta)$ can only be computed by marginalizing out the hidden variables $p(x | \theta) = \int p(x, \zeta | \theta) \mathrm{d} \zeta $~\cite[pp.~322]{ref:murphy2012machine}. In the g-and-k model, the density function does not exist in closed form and can only be expressed in terms of the derivatives of quantile functions, which implies that $p(x | \theta)$ needs to be computed numerically for each individual observation $x$~\cite{haynes1997robustness}. Likewise, evaluating the evidence $p(x)$ is intractable whenever the evaluation of the likelihood $p(x | \theta)$ is.  To avoid calculating $p(x)$ in the process of constructing the posterior, the variational Bayes approach~\cite{ref:blei2017variational} maximizes the evidence lower bound (ELBO),~which is tantamount to solving
\be \label{eq:elbo}
\Min{\mbb Q \in \mc Q} \; \KL( \mbb Q \parallel \pi) - \EE_{\mbb Q} [ \log p(x | \theta)],
\ee
where $\KL(\mbb Q \parallel \pi)$ denotes the Kullback-Leibler (KL) divergence from $\mbb Q$ to $\pi$. One can show that if the feasible set $\mc Q$ contains all probability measures supported on $\Theta$, then the optimal solution $\mbb Q\opt$ of~\eqref{eq:elbo} coincides with the true posterior distribution. Consequently, inferring the posterior is equivalent to solving the convex optimization problem~\eqref{eq:elbo} that depends only on the prior distribution $\pi$ and the likelihood $p(x | \theta)$. There are scalable algorithms to solve the ELBO maximization problem~\cite{hoffman2013stochastic}, and the variational Bayes approach has been successfully applied in inference tasks~\cite{gao2016variational, gorbach2017scalable}, reinforcement learning~ \cite{houthooft2016vime, mohamed2015variational}, dimensionality reduction~\cite{nakajima2012perfect} and training deep neural networks~\cite{kingma2015variational}. Nevertheless, the variational Bayes approach requires both perfect knowledge and a tractable representation of the likelihood $p(x|\theta)$, which is often not available in practice.


While the likelihood $p(x | \theta)$ may be intractable to compute, we can approximate $p(x|\theta)$ from available data in many applications. For example, in the classification task where $\Theta = \{\theta_1, \ldots, \theta_C\}$ denotes the class labels, the class conditional probabilities $p(x|\theta_i)$ and the prior distribution $\pi(\theta_i)$ can be inferred from the training data, and a probabilistic classifier can be constructed by assigning $x$ to each class randomly under the posterior distribution~\cite[pp.~43]{ref:bishop06pattern}. Approximating the intractable likelihood from available samples is also the key ingredient of approximate Bayesian computation (ABC), a popular statistical method for likelihood-free inference that has gained widespread success in various fields \cite{ref:beaumont2002genetics, ref:csillery2010practice, ref:Toni2009abc}. The sampling-based likelihood algorithm underlying ABC assumes that we have access to a simulation device that can generate $N$ i.i.d.~samples $\wh x_1, \ldots, \wh x_N$  from $p(\cdot | \theta)$, and it approximates the likelihood $p(x | \theta)$ by the surrogate $p_h(x | \theta)$ defined as
\be \label{eq:p:approx}
p_h(x | \theta) \;\; = \;\; \int_{\X} K_h\left( d(x, \wh x) \right) p(\wh x | \theta) \mathrm{d} \wh x \;\; \approx \;\; \frac{1}{N} \sum_{j = 1}^N K_h \left(d(x, \wh x_j) \right),
\ee
where $K_h$ is a kernel function with kernel width $h$, $d(\cdot, \cdot)$ is a distance on $\X$, and the approximation is due to the reliance upon finitely many samples~\cite{ref:park2016k2abc, ref:price2018bayes}.



In this paper, we propose an alternative approach to approximate the likelihood $p(x | \theta)$. We assume that the sample space $\X$ is countable, and hence $p(\cdot | \theta)$ is a probability mass function. We model the decision maker's nominal belief about $p( \cdot | \theta)$ by a nominal probability mass function $\wh \nu_\theta$ supported on $\X$, which in practice typically represents the empirical distribution supported on the (possibly simulated) training samples. We then approximate the likelihood $p(x | \theta)$ by the optimal value of the following non-parametric \textit{optimistic likelihood} problem
\be \label{eq:likelihood}
\Sup{\nu \in \B_\theta(\wh \nu_\theta)} \; \nu(x),
\ee
where $\B_\theta(\wh \nu_\theta)$ is a set that contains all probability mass functions in the vicinity of $\wh \nu_\theta$. In the distributionally robust optimization literature, the set $\B_\theta(\wh \nu_\theta)$ is referred to as the ambiguity set \cite{ben2013robust, ref:esfahani2018data, ref:wiesemann2014distributionally}. In contrast to the distributionally robust optimization paradigm, which would look for a worst-case measure that \textit{minimizes} the probability of observing $x$ among all measures contained in $\B_\theta(\wh \nu_\theta)$, the optimistic likelihood problem~\eqref{eq:likelihood} determines a best-case measure that \textit{maximizes} this quantity. Thus, problem~\eqref{eq:likelihood} is closely related to the literature on practicing optimism upon facing ambiguity, which has been shown to be beneficial in multi-armed bandit problems \cite{bubeck2012regret}, planning \cite{munos2014bandits}, classification~\cite{bi2005support}, image denoising~\cite{hanasusanto2017ambiguous}, Bayesian optimization \cite{brochu2010tutorial, srinivas2010gaussian}, etc.

The choice of the set $\B_\theta(\wh \nu_\theta)$ in~\eqref{eq:likelihood} directly impacts the performance of the optimistic likelihood approach. In the limiting case where $\B_\theta(\wh \nu_\theta)$ approaches a singleton $\{\wh \nu_\theta\}$, the optimistic likelihood problem recovers the nominal estimate $\wh \nu_\theta(x)$. Since this approximation is only reasonable when $\wh \nu_\theta(x) > 0$, which is often violated when $\wh \nu_\theta$ is estimated from few training samples, a strictly positive size of $\B_\theta(\wh \nu_\theta)$ is preferred. Ideally, the shape of $\B_\theta(\wh \nu_\theta)$ is chosen so that problem~\eqref{eq:likelihood} is computationally tractable and at the same time offers a promising approximation quality. We explore in this paper three different constructions of $\B_\theta(\wh \nu_\theta)$: the Kullback-Leibler divergence \cite{ben2013robust}, a description based on moment conditions \cite{ref:delage2010distributionally, ref:mengersen1321empirical} and the Wasserstein distance \cite{ref:kuhn2019wasserstein, ref:nguyen2018distributionally, cuturi2019computational, ref:shafieezadeh2017regularization, ref:sinha2018certifiable}.

The contributions of this paper may be summarized as follows.
\begin{enumerate}[leftmargin = 5mm]
	\item We show that when $\B_\theta(\wh \nu_\theta)$ is constructed using the KL divergence, the optimistic likelihood~\eqref{eq:likelihood} reduces to a finite convex program, which in specific cases admits an analytical solution. However, this approach does not satisfactorily approximate $p (x | \theta)$ for previously unseen samples $x$.
	\item We demonstrate that when $\B_\theta(\wh \nu_\theta)$ is constructed using moment conditions, the optimistic likelihood~\eqref{eq:likelihood} can be computed in closed form. However, since strikingly different distributions can share the same lower-order moments, this approach is often not flexible enough to accurately capture the tail behavior of $\wh \nu_\theta$.
	\item We show that when $\B_\theta(\wh \nu_\theta)$ is constructed using the Wasserstein distance, the optimistic likelihood~\eqref{eq:likelihood} coincides with the optimal value of a linear program that can be solved using a greedy heuristics. Interestingly, this variant of the optimistic likelihood results in a likelihood approximation whose decay pattern resembles that of an exponential kernel approximation.
	\item We use our optimistic likelihood approximation in the ELBO problem~\eqref{eq:elbo} for posterior inference. We prove that the resulting posterior inference problems under the KL divergence and the Wasserstein distance enjoy strong theoretical guarantees, and we illustrate their promising empirical performance in numerical experiments.
\end{enumerate}
While this paper focuses on the non-parametric approximation of the likelihood $p(x | \theta)$, we emphasize that the optimistic likelihood approach can also be applied in the parametric setting. More specifically, if $p(\cdot|\theta)$ belongs to the family of Gaussian distributions, then the optimistic likelihood approximation can be solved efficiently using geodesically convex optimization~\cite{ref:nguyen2019calculating}.

The remainder of the paper is structured as follows. We study the optimistic likelihood problem under the KL ambiguity set, under moment conditions and under the Wasserstein distance in Sections~\ref{sect:KL}--\ref{sect:Wass}, respectively. Section~\ref{sect:guarantee} provides a performance guarantee for the posterior inference problem using our optimistic likelihood. All proofs and additional material are relegated to the Appendix. In Sections~\ref{sect:KL}--\ref{sect:Wass}, the development of the theoretical results is generic, and hence the dependence of $\wh \nu_\theta$ and $\B_\theta(\wh \nu_\theta)$ on $\theta$ is omitted to avoid clutter.

\textbf{Notation.} We denote by $\M(\X)$ the set of all probability mass functions supported on $\X$, and we refer to the support of $\nu \in \M(\X)$ as $\supp(\nu)$. For any $z \in \X$, $\delta_z$ is the delta-Dirac measure at $z$. For any $N \in \mbb N_+$, we use $[N]$ to denote the set $\{1, \ldots, N\}$. $\mathbbm{1}_x(\cdot)$ is the indicator function at $x$, i.e.,  $\mathbbm{1}_x(\xi) = 1$ if $\xi = x$, and $\mathbbm{1}_x(\xi) = 0$ otherwise.

\section{Optimistic Likelihood using the Kullback-Leibler Divergence}
\label{sect:KL}

We first consider the optimistic likelihood problem where the ambiguity set is constructed using the KL divergence. The KL divergence is the starting point of the ELBO maximization problem~\eqref{eq:elbo}, and thus it is natural to explore its potential in our likelihood approximation.

\begin{definition}[KL divergence] \label{def:KL}
Let $\nu_1,\nu_2$ be two probability mass functions on $\mc X$ such that $ \nu_1 $ is absolutely continuous with respect to $\nu_2$.	The KL divergence between $\nu_1$ and $\nu_2 $ is defined as
	\[
	\KL(\nu_1 \parallel \nu_2) \Let \sum_{z \in \X} f \left( \nu_1(z) / \nu_2(z) \right) \, \nu_2(z),
	\]
	where  $f(t) = t\log(t) - t + 1$.
\end{definition}

We now consider the KL divergence ball $\B_{\KL}(\wh \nu, \eps)$ centered at the empirical distribution $\wh \nu$ with radius $\eps \ge 0$, that is,
\be \label{eq:KL:ball}
	\B_{\KL}(\wh \nu, \eps) = \left\{ \nu \in \M(\X) : \KL (\wh \nu \parallel  \nu) \leq \eps \right\}.
\ee
Moreover, we assume that the nominal distribution $\wh \nu$ is supported on $N$ distinct points $\wh x_1, \ldots, \wh x_N$, that is, $\wh \nu = \sum_{j \in [N]} \wh \nu_j \delta_{\wh x_j}$ with $\wh \nu_j > 0\, \forall j \in [N]$ and $\sum_{j \in [N]} \wh \nu_j = 1$.

The set $\B_{\KL}(\wh \nu, \eps)$ is not weakly compact because $\X$ can be unbounded, and thus the existence of a probability measure that optimizes the optimistic likelihood problem~\eqref{eq:likelihood} over the feasible set $\B_{\KL}(\wh \nu, \eps)$ is not immediate. The next proposition asserts that the optimal solution exists, and it provides structural insights about the support of the optimal measure.


\begin{proposition}[Existence of optimizers; KL ambiguity] \label{prop:structure:KL}
	For any $\eps \ge 0$ and $x \in \X$, there exists a measure $\nu_{\KL}\opt \in \B_{\KL}(\wh \nu, \eps)$ such that
	\be \label{eq:prob:KL}
	\Sup{\nu \in \B_{\KL}(\wh \nu, \eps)} \, \nu(x) = \nu\opt_{\KL}(x)
	\ee
	Moreover, $\nu_{\KL}\opt$ is supported on at most $N+1$ points satisfying $\supp(\nu\opt_{\KL})  \subseteq \supp(\wh \nu)  \cup \{  x \}$.
\end{proposition}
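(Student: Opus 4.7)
The plan is to reduce the infinite-dimensional problem \eqref{eq:prob:KL} to a finite-dimensional one by showing that every feasible measure can be modified into one supported on the finite set $A \defeq \supp(\wh \nu) \cup \{x\}$ without decreasing the objective value or violating the KL constraint. Given any $\nu \in \B_{\KL}(\wh \nu, \eps)$, I would construct $\tilde \nu$ by setting $\tilde \nu(z) = \nu(z)$ for $z \in A \setminus \{x\}$, setting $\tilde \nu(z) = 0$ for $z \notin A$, and absorbing the leftover mass at $x$ via $\tilde \nu(x) = \nu(x) + \sum_{z \notin A} \nu(z)$. By construction, $\tilde \nu$ is a probability mass function supported on at most $N+1$ points, and clearly $\tilde \nu(x) \ge \nu(x)$.

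The crucial step is checking that $\tilde \nu \in \B_{\KL}(\wh \nu, \eps)$. Since $\nu$ is feasible, $\wh \nu$ is absolutely continuous with respect to $\nu$, so $\nu(z) > 0$ whenever $\wh \nu(z) > 0$, and the same remains true for $\tilde \nu$ because $\tilde \nu$ only increases mass on $x$ and preserves mass on the rest of $\supp(\wh \nu)$. Writing the KL divergence in its standard form $\sum_{z \in \supp(\wh \nu)} \wh \nu(z) \log(\wh \nu(z)/\nu(z))$, the contribution from any $z \in \supp(\wh \nu) \setminus \{x\}$ is unchanged, and the contribution from $z = x$ (if $x \in \supp(\wh \nu)$) satisfies $\wh \nu(x) \log(\wh \nu(x)/\tilde \nu(x)) \le \wh \nu(x) \log(\wh \nu(x)/\nu(x))$ because $\tilde \nu(x) \ge \nu(x)$ and $\wh \nu(x) \ge 0$. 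Hence $\KL(\wh \nu \parallel \tilde \nu) \le \KL(\wh \nu \parallel \nu) \le \eps$, establishing that the supremum in \eqref{eq:prob:KL} can be taken over the subset of $\B_{\KL}(\wh \nu, \eps)$ consisting of measures supported on $A$.

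The remaining step is existence on this reduced feasible set. Identifying measures on $A$ with vectors in the $(|A|-1)$-dimensional probability simplex $\Delta_A$, the reduced feasible set is the intersection of the compact simplex $\Delta_A$ with the sublevel set $\{\nu \in \Delta_A : \KL(\wh \nu \parallel \nu) \le \eps\}$. The map $\nu \mapsto \KL(\wh \nu \parallel \nu) = \sum_{z \in \supp(\wh \nu)} \wh \nu(z) \log(\wh \nu(z) / \nu(z))$ is lower semicontinuous on $\Delta_A$ (with values in $[0, +\infty]$, diverging only where $\wh \nu \not\ll \nu$), so its sublevel set is closed. The intersection is therefore compact, and the linear objective $\nu \mapsto \nu(x)$ attains its maximum on it by Weierstrass. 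The maximizer $\nu\opt_{\KL}$ is then a feasible measure for the original problem achieving the supremum, and it is supported on at most $N+1$ points of $A = \supp(\wh \nu) \cup \{x\}$, yielding both assertions at once.

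I do not anticipate a serious obstacle, but the careful bookkeeping will be in treating the two cases $x \in \supp(\wh \nu)$ and $x \notin \supp(\wh \nu)$ uniformly, and in invoking the correct semicontinuity convention (with $0 \log 0 = 0$) so that the sublevel set remains closed even at boundary points of $\Delta_A$ where some $\nu(z)$ vanishes outside $\supp(\wh \nu)$.
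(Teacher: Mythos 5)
Your proof is correct, and while it follows the same overall skeleton as the paper's (restrict the support to $\supp(\wh \nu) \cup \{x\}$ by sweeping stray mass onto $x$, then invoke compactness and Weierstrass), the execution of the key step is genuinely simpler. The paper verifies the KL constraint term by term in the $f$-divergence form $\sum_z f(\wh\nu(z)/\nu(z))\,\nu(z)$ with $f(t)=t\log t - t + 1$; since $t \mapsto t f(\wh \nu_j/t)$ is \emph{increasing} for $t > \wh \nu_j$, piling all of the mass $\kappa$ onto $x$ can increase the individual term at $x$ when $x \in \supp(\wh\nu)$, which forces the paper into a delicate two-case argument (restricting to $\bar\nu(\wh x_N) \ge \wh \nu_N$, choosing $\bar\epsilon$, and redistributing the remainder over deficit atoms). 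You sidestep all of this by first reducing the divergence to the form $\sum_{z \in \supp(\wh\nu)} \wh\nu(z)\log(\wh\nu(z)/\nu(z))$ --- valid here because the $-\wh\nu(z)+\nu(z)$ corrections telescope to zero for probability measures with $\wh\nu \ll \nu$ --- in which every summand is monotonically decreasing in $\nu(z)$, so the single uniform construction $\tilde\nu(x) = \nu(x) + \sum_{z \notin A}\nu(z)$ handles both cases at once. Your compactness argument is also more elementary: you work directly in the finite-dimensional simplex with lower semicontinuity of the sublevel set, whereas the paper invokes weak compactness of measures on a finite support set and upper semicontinuity of $\nu \mapsto \nu(x)$ (its Lemma A.1). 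Both are sound; the only small polish I would add is to state explicitly the identity between the paper's $f$-form of Definition~\ref{def:KL} and the reduced form you use, since that identity is what makes your monotonicity argument go through.
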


Proposition~\ref{prop:structure:KL} suggests that the optimistic likelihood problem~\eqref{eq:prob:KL}, inherently an \textit{in}finite dimensional problem whenever $\X$ is infinite, can be formulated as a finite dimensional problem. The next theorem provides a finite convex programming reformulation of~\eqref{eq:prob:KL}.

\begin{theorem}[Optimistic likelihood; KL ambiguity] \label{thm:KL-divergence}
	For any $\eps \ge 0$ and $x \in \X$,
	\begin{itemize}[leftmargin=5mm]
		\item if $x \in \supp(\wh \nu)$, then problem~\eqref{eq:prob:KL} can be reformulated as the finite convex optimization problem
		\be \notag 
			\begin{array}{c}
				\hspace{-5mm} \Sup{\nu \in \B_{\KL}(\wh \nu, \eps)} \, \nu(x) = \max\left\{ \sum_{j \in [N]} y_j \mathbbm{1}_x(\wh x_j)  : y \in \R^N_{++}, \, \sum_{j \in [N]} \wh \nu_j \log \left( \wh \nu_j / y_j \right) \leq \eps, \, e^\top y = 1 \right\},
		\end{array}
		\ee
		where $e$ is the vector of all ones;
		\item if $x \not \in \supp(\wh \nu)$, then problem~\eqref{eq:prob:KL} has the optimal value $1 - \exp\left(-\eps\right)$.		
	\end{itemize}
\end{theorem}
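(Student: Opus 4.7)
My plan is to leverage Proposition~\ref{prop:structure:KL}, which reduces the infinite-dimensional optimistic likelihood problem~\eqref{eq:prob:KL} to a search over probability mass functions supported on at most $N+1$ points from $\supp(\wh \nu) \cup \{x\}$. The two bullet cases then differ only in whether the new point $x$ is already an atom of $\wh \nu$ or is genuinely new, and I would treat them separately.

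For the case $x \in \supp(\wh \nu)$, I would parameterize any candidate measure as $\nu = \sum_{j \in [N]} y_j \delta_{\wh x_j}$. Since $\wh \nu_j > 0$ for every $j \in [N]$, the absolute continuity requirement $\wh \nu \ll \nu$ implicit in Definition~\ref{def:KL} forces $y_j > 0$ for all $j$. Substituting this parameterization, $\KL(\wh \nu \parallel \nu)$ collapses to $\sum_{j \in [N]} \wh \nu_j \log(\wh \nu_j / y_j)$ after the terms $-\wh \nu_j$ and $+y_j$ in $f(\wh \nu_j / y_j)\, y_j$ sum to zero across $j$; the probability normalization becomes $e^\top y = 1$; and the objective $\nu(x)$ selects the unique $y_{j^\star}$ with $\wh x_{j^\star} = x$, which is captured by $\sum_{j \in [N]} y_j \mathbbm{1}_x(\wh x_j)$. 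This directly yields the reformulation in the first bullet.

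For the case $x \notin \supp(\wh \nu)$, I would parameterize $\nu = y_0 \delta_x + \sum_{j \in [N]} y_j \delta_{\wh x_j}$ with $y_j > 0$, $y_0 \geq 0$, and $y_0 + e^\top y = 1$. The key observation is that $y_0$ does not enter the KL expression $\sum_{j \in [N]} \wh \nu_j \log(\wh \nu_j / y_j)$ at all, so for any fixed candidate level $y_0 \in [0, 1)$ the feasibility of the problem reduces to whether $\min\{\sum_{j \in [N]} \wh \nu_j \log(\wh \nu_j / y_j) : y_j > 0,\, e^\top y = 1 - y_0\} \leq \eps$. A brief Lagrangian argument (equivalently, Gibbs' inequality applied to $\wh \nu$ and the normalized vector $y / (1 - y_0)$) identifies the minimizer as $y_j = (1 - y_0) \wh \nu_j$, giving minimum value $-\log(1 - y_0)$. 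The feasibility condition thus becomes $-\log(1 - y_0) \leq \eps$, equivalently $y_0 \leq 1 - \exp(-\eps)$, so the supremum of $\nu(x) = y_0$ equals $1 - \exp(-\eps)$, attained by $y_0^\star = 1 - \exp(-\eps)$ and $y_j^\star = \exp(-\eps) \wh \nu_j > 0$.

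The main obstacle is the decoupling argument in the second case: recognizing that the KL divergence only restricts the sub-vector $(y_1, \ldots, y_N)$ while the objective depends solely on the new mass $y_0$, and then solving the induced KL-minimization subproblem in closed form via $y_j \propto \wh \nu_j$. A secondary technical subtlety is the absolute continuity constraint $\wh \nu \ll \nu$, which must be shown to be compatible with the optimum lying in the open set $\R^N_{++}$; this is delivered by Proposition~\ref{prop:structure:KL} together with the explicit form of the optimizers exhibited above.
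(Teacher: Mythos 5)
Your proof is correct. The first bullet is handled exactly as in the paper: invoke Proposition~\ref{prop:structure:KL} to restrict the support, parameterize $\nu=\sum_j y_j\delta_{\wh x_j}$, and read off the finite convex program. For the second bullet, however, you take a genuinely different and arguably cleaner route. The paper converts the problem to minimizing $\sum_j y_j$ subject to the KL constraint, runs a Lagrangian duality chain (restricting the multiplier to $(0,1]$, solving the inner minimization at $y_j^\star=\dualvar\wh\nu_j$, optimizing over $\dualvar$ to get the lower bound $\exp(-\eps)$), then separately exhibits the feasible point $y_j=\exp(-\eps)\wh\nu_j$ for the matching upper bound, and finally patches the boundary case $\eps=0$ by a continuity argument via Berge's maximum theorem. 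Your decoupling observation --- that the objective depends only on the new mass $y_0$ while the KL constraint depends only on $(y_1,\dots,y_N)$ through the normalization $e^\top y = 1-y_0$ --- reduces everything to the scalar feasibility question $\min\{\sum_j\wh\nu_j\log(\wh\nu_j/y_j):e^\top y=1-y_0\}\le\eps$, which Gibbs' inequality solves in closed form as $-\log(1-y_0)$. This yields $1-\exp(-\eps)$ for \emph{all} $\eps\ge 0$ in one stroke, with no duality and no separate continuity argument at $\eps=0$; the price is only that you must justify Gibbs' inequality (or the equivalent one-line normalization trick $y_j=(1-y_0)p_j$), which is standard. Both arguments ultimately identify the same optimizer $y_j^\star=\exp(-\eps)\wh\nu_j$, and your handling of the absolute continuity requirement ($\wh\nu\ll\nu$ forcing $y\in\R^N_{++}$) matches the paper's implicit treatment.
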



Theorem~\ref{thm:KL-divergence} indicates that the determining factor in the KL optimistic likelihood approximation is whether the observation $x$ belongs to the support of the nominal measure $\wh \nu$ or not. If $x \not \in \supp(\wh \nu)$, then the optimal value of~\eqref{eq:prob:KL} does not depend on $x$, and the KL divergence approach assigns a flat likelihood. Interestingly, in Appendix~\ref{sect:fdiv} we prove a similar result for the wider class of $f$-divergences, which contains the KL divergence as a special case. While this flat likelihood behavior may be useful in specific cases, one would expect the relative distance of $x$ to the atoms of $\wh \nu$ to influence the optimal value of the optimistic likelihood problem, similar to the neighborhood-based intuition reflected in the kernel approximation approach. Unfortunately, the lack of an underlying metric in its definition implies that the $f$-divergence family cannot capture this intuition, and thus $f$-divergence ambiguity sets are not an attractive option to approximate the likelihood of an observation $x$ that does not belong to the support of the nominal measure $\wh \nu$.

\begin{remark}[On the order of the measures]
	An alternative construction of the KL ambiguity set, which has been widely used in the literature \cite{ben2013robust}, is 
	\[ \wh \B_{\KL}(\wh \nu, \eps) = \left\{ \nu \in \M(\X) : \KL( \nu \parallel \wh \nu) \leq \eps \right\}, \]
	where the two measures $\nu$ and $\wh \nu$ change roles. However, in this case the KL divergence imposes that all $\nu \in \wh \B_{\KL}(\wh \nu, \eps)$ are absolutely continuous with respect to $\wh \nu$. In particular, if $x \not\in \supp(\wh \nu)$, then $\nu(x) = 0$ for all $\nu \in \wh \B_{\KL}(\wh \nu, \eps)$, and $\wh \B_{\KL}(\wh \nu, \eps)$ is not able to approximate the likelihood of $x$ in a meaningful way.
\end{remark}

\section{Optimistic Likelihood using Moment Conditions}
\label{sect:moment}

In this section we study the optimistic likelihood problem~\eqref{eq:likelihood} when the ambiguity set $\B(\wh \nu)$ is specified by moment conditions. For tractability purposes, we focus on ambiguity sets $\B_{\text{MV}}(\wh \nu)$ that contain all distributions which share the same mean $\wh \mu$ and covariance matrix $\covsa \in \PD^m$ with the nominal distribution $\wh \nu$.
Formally, this moment ambiguity set $\B_{\text{MV}}(\wh \nu)$ can be expressed as
\[
\B_{\text{MV}}(\wh \nu) = \left\{ \nu \in \mc M(\mc X) : \, \EE_\nu[\tilde x] = \msa, \; \EE_\nu[\tilde x \tilde x^\top] = \covsa + \msa \msa^\top \right\}.
\]

The optimistic likelihood~\eqref{eq:likelihood} over the ambiguity set~$\B_{\text{MV}}(\wh \nu)$ is a moment problem that is amenable to a well-known reformulation as a polynomial time solvable semidefinite program \cite{bertsimas2000moment}. Surprisingly, in our case the optimal value of the optimistic likelihood problem is available in closed form. This result was first discovered in~\cite{ref:marshall1960multivariate}, and a proof using optimization techniques can be found in~\cite{ref:bertsimas2005optimal}.


\begin{theorem}[Optimistic likelihood; mean-variance ambiguity~\cite{ref:bertsimas2005optimal, ref:marshall1960multivariate}] \label{thm:MV}
	Suppose that $\wh \nu$ has the mean vector $\msa \in \R^m$ and the covariance matrix $\covsa \in \PD^m$. For any $x \in \mc X$, the optimistic likelihood problem~\eqref{eq:likelihood} over the moment ambiguity set $\B_{\text{MV}}(\wh \nu)$ has the optimal value
	\be \label{eq:moment}
	\Sup{\nu \in \B_{\text{MV}}(\wh \nu)} \nu(x) =  \frac{1}{1 + (x - \msa)^\top \covsa^{-1} (x - \msa)} \in (0, 1].
	\ee
\end{theorem}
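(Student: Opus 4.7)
The plan is to prove the claim by establishing matching upper and lower bounds on $\sup_{\nu \in \B_{\text{MV}}(\wh\nu)} \nu(x)$, following the classical Marshall--Olkin argument. Throughout, write $\tau = (x-\msa)^\top \covsa^{-1}(x-\msa) \ge 0$, so the asserted value equals $1/(1+\tau)$.

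For the upper bound, I would take any $\nu \in \B_{\text{MV}}(\wh\nu)$ with $p = \nu(\{x\}) \in (0,1)$ (the case $p = 0$ is trivial and $p = 1$ is ruled out by $\covsa \in \PD^m$) and decompose $\nu = p\delta_x + (1-p)\tilde\nu$ for some probability measure $\tilde\nu$ on $\X\setminus\{x\}$. Translating the two moment conditions on $\nu$ into conditions on $\tilde\nu$ yields $\EE_{\tilde\nu}[\tilde\xi] = (\msa - px)/(1-p)$ and $\EE_{\tilde\nu}[\tilde\xi\tilde\xi^\top] = ((\covsa + \msa\msa^\top) - pxx^\top)/(1-p)$, and a short computation gives
\[
(1-p)^2 \left( \EE_{\tilde\nu}[\tilde\xi\tilde\xi^\top] - \EE_{\tilde\nu}[\tilde\xi]\, \EE_{\tilde\nu}[\tilde\xi]^\top \right) = (1-p)\covsa - p (x-\msa)(x-\msa)^\top.
\]
The left-hand side is a covariance, hence positive semidefinite, so the right-hand side is as well. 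Applying the standard equivalence $A \succeq bb^\top \Leftrightarrow b^\top A^{-1} b \le 1$ (valid for $A \succ 0$) with $A = (1-p)\covsa$ and $b = \sqrt{p}\,(x-\msa)$ then forces $\frac{p}{1-p}\tau \le 1$, i.e., $p \le 1/(1+\tau)$.

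For the lower bound, I would exhibit a witness distribution that attains $p^\star = 1/(1+\tau)$: place mass $p^\star$ at $x$, so the residual mass $1-p^\star$ must be borne by a probability measure $\tilde\nu^\star$ with the prescribed mean and second moment. Running the upper-bound calculation in reverse, the associated covariance is positive semidefinite (rank-deficient along $\covsa^{-1}(x-\msa)$), so the Richter--Tchakaloff theorem yields such a $\tilde\nu^\star$ supported on at most $m+1$ atoms.

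The main obstacle will be reconciling this construction with the paper's countable sample space assumption, since the atoms of $\tilde\nu^\star$ need not lie in $\X$. I anticipate this is handled either by interpreting $\B_{\text{MV}}(\wh\nu)$ over the ambient $\R^m$ (as in the cited references~\cite{ref:bertsimas2005optimal, ref:marshall1960multivariate}), or by an approximation argument that perturbs each constructed atom to a nearby point of $\X$ and passes to the supremum. An alternative route altogether is to dualize directly: the moment problem admits a semidefinite dual in which the constraint $\lambda_0 + \lambda^\top \xi + \xi^\top \Lambda \xi \ge \mathbbm{1}_x(\xi)$ for all $\xi$ can be solved in closed form, producing the same value $1/(1+\tau)$; this is the approach in~\cite{ref:bertsimas2005optimal}.
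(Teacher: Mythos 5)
The paper does not actually prove Theorem~\ref{thm:MV}; it delegates the proof entirely to the cited references, remarking only that the result ``was first discovered in \cite{ref:marshall1960multivariate}'' and that ``a proof using optimization techniques can be found in \cite{ref:bertsimas2005optimal}.'' Your reconstruction is the classical Marshall--Olkin primal argument, and the core of it is correct: the decomposition $\nu = p\delta_x + (1-p)\tilde\nu$, the identity $(1-p)^2\mathrm{Cov}(\tilde\nu) = (1-p)\covsa - p(x-\msa)(x-\msa)^\top$, and the Schur-complement equivalence $A \succeq bb^\top \Leftrightarrow b^\top A^{-1}b \le 1$ together give exactly $p \le 1/(1+\tau)$. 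The SDP-duality route you mention at the end is indeed the one in \cite{ref:bertsimas2005optimal}; either is acceptable, and the paper endorses both by citation.

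Two caveats on your lower bound. First, when $x = \msa$ (so $\tau = 0$ and $p^\star = 1$) no witness exists: $\nu = \delta_x$ has zero covariance, contradicting $\covsa \in \PD^m$. The supremum equals $1$ but is attained only asymptotically; the paper itself exhibits the requisite family in Appendix~\ref{sect:moment-vs-Wass}, namely $\nu_k = (1-k^{-2})\delta_0 + \tfrac{1}{2k^2}(\delta_k + \delta_{-k})$, and your argument should be phrased as producing a maximizing sequence rather than a maximizer in this case. Second, you are right to worry about the countable sample space: the atoms produced by Richter--Tchakaloff live in $\R^m$, not necessarily in $\X$, and since $\B_{\text{MV}}(\wh\nu) \subseteq \M(\X)$ the exact witness may be infeasible. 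The paper silently interprets the moment problem over the ambient space (as do both cited references), so for strict consistency with the theorem as stated one needs either that convention or the perturbation argument you sketch; for a sparse $\X$ the supremum over $\M(\X)$ could in principle fall short of $1/(1+\tau)$, so this is not a purely cosmetic point. Flagging it is to your credit --- it is a gap in the paper's statement more than in your proof.
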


The optimal value~\eqref{eq:moment} of the optimistic likelihood problem depends on the location of the observed sample point $x$, and hence the moment ambiguity set captures the behavior of the likelihood function in a more realistic way than the KL divergence ambiguity set from Section~\ref{sect:KL}. Moreover, the moment ambiguity set $\B_{\text{MV}}(\wh \nu)$ does not depend on any hyper-parameters that need to be tuned. However, since the construction of $\B_{\text{MV}}(\wh \nu)$ only relies on the first two moments of the nominal distribution $\wh \nu$, it fails to accurately capture the tail behavior of $\wh \nu$, see Appendix~\ref{sect:moment-vs-Wass}. This motivates us to look further for an ambiguity set that faithfully accounts for the tail behavior of $\wh \nu$.

\section{Optimistic Likelihood using the Wasserstein Distance}
\label{sect:Wass}

We now study a third construction for the ambiguity set $\B(\wh \nu)$, which is based on the type-1 Wasserstein distance (also commonly known as the Monge-Kantorovich distance), see~\cite{ref:villani2008optimal}. Contrary to the KL divergence, the Wasserstein distance inherently depends on the ground metric of the sample space~$\X$.
\begin{definition}[Wasserstein distance]
	\label{def:wasserstein}	
	The type-1 Wasserstein distance between two measures $\nu_1,\nu_2\in\M(\X)$ is defined as
	\be
	\notag
	\Wass(\nu_1, \nu_2) \Let \Inf{\lambda \in \Lambda(\nu_1,\nu_2)}  \EE_\lambda \left[d (x_1, x_2)\right],
	\ee
	where $\Lambda(\nu_1,\nu_2)$ denotes the set of all distributions on $\X \times \X$ with the first and second marginal distributions being $\nu_1$ and  $\nu_2$, respectively, and $d$ is the ground metric of $\X$. 
\end{definition}
The Wasserstein ball $\B_\Wass(\wh \nu, \eps)$ centered at the nominal distribution $\wh \nu$ with radius $\eps \ge 0$ is
\be \label{eq:Wass:ball}
	\B_\Wass(\wh \nu, \eps) = \left\{ \nu \in \mc M(\mc X) : \Wass( \nu , \wh \nu) \leq \eps \right\}.
\ee
We first establish a structural result for the optimistic likelihood problem over the Wasserstein ambiguity set. This is the counterpart to Proposition~\ref{prop:structure:KL} for the KL divergence.
\begin{proposition}[Existence of optimizers; Wasserstein ambiguity]\label{prop:structure:Wass}
	For any $\eps \ge 0$ and $x \in \X$, there exists a measure $\nu_\Wass\opt \in \B_\Wass(\wh \nu, \eps)$ such that
	\be \label{eq:prob:Wass}
	\Sup{\nu \in \B_\Wass(\wh \nu, \eps)} \, \nu(x) = \nu_\Wass\opt(x).
	\ee
	Furthermore, $ \nu_\Wass\opt$ is supported on at most $N+1$ points satisfying $\supp(\nu_\Wass\opt)  \subseteq \supp(\wh \nu) \cup \{ x \}$.
\end{proposition}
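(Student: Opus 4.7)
The plan is to establish the existence of an optimizer by a standard compactness-plus-upper-semicontinuity argument, and then to prove the support structure by a transport-plan surgery that turns any optimizer into one supported on $\supp(\wh\nu) \cup \{x\}$ without decreasing the objective.

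For existence, I would first show that $\B_\Wass(\wh\nu, \eps)$ is weakly compact. Tightness follows from the triangle inequality for $\Wass$: for every $\nu \in \B_\Wass(\wh\nu, \eps)$, we have $\EE_\nu[d(\tilde x, \wh x_1)] \le \Wass(\nu, \delta_{\wh x_1}) \le \eps + \max_{j} d(\wh x_1, \wh x_j)$, so Markov's inequality yields a uniform tail bound and Prokhorov's theorem applies. Weak closedness follows from the lower semicontinuity of $\Wass$ with respect to weak convergence. For the objective, the singleton $\{x\}$ is closed in $\X$, so Portmanteau's theorem implies that $\nu \mapsto \nu(\{x\})$ is upper semicontinuous in the weak topology. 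An upper semicontinuous function on a weakly compact set attains its maximum, producing an optimizer $\nu\opt_\Wass$.

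For the support structure, let $\nu\opt_\Wass$ be any such optimizer and let $\lambda\opt$ be an optimal coupling between $\nu\opt_\Wass$ (first marginal) and $\wh\nu$ (second marginal); such a coupling exists by Kantorovich's theorem. I would decompose $\lambda\opt = \lambda_1 + \lambda_2$, where $\lambda_1$ is the restriction to $(\supp(\wh\nu) \cup \{x\}) \times \X$ and $\lambda_2$ is the restriction to its complement. Now define a new coupling $\lambda'$ obtained from $\lambda\opt$ by replacing, for every pair $(y, \wh x_j)$ in the support of $\lambda_2$, the transport $y \mapsto \wh x_j$ by the stationary transport $\wh x_j \mapsto \wh x_j$. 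By construction the second marginal of $\lambda'$ equals $\wh\nu$, the first marginal $\nu'$ is supported in $\supp(\wh\nu) \cup \{x\}$, the mass $\nu'(x) = \nu\opt_\Wass(x)$ is preserved, and the transport cost $\int d(y,w)\,\mathrm{d}\lambda'(y,w) = \int d(y,w)\,\mathrm{d}\lambda_1(y,w) \le \int d(y,w)\,\mathrm{d}\lambda\opt(y,w) \le \eps$, since the replaced transports now have zero cost. Hence $\nu' \in \B_\Wass(\wh\nu, \eps)$ is an optimizer with the desired support.

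The main obstacle is the careful bookkeeping in the coupling surgery: verifying that $\lambda'$ remains a bona fide probability coupling whose first marginal is genuinely supported on $\supp(\wh\nu) \cup \{x\}$, and confirming that the replacement cannot raise the transport cost. Once the construction is written out rigorously, the cost bound is immediate because the substituted transports have zero length, so the only delicate part is ensuring that the marginal constraint on $\wh\nu$ is preserved by the substitution.
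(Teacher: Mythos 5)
Your proposal is correct, and while the existence half mirrors the paper's argument, the support half takes a genuinely different and arguably cleaner route. For existence, the paper also combines upper semicontinuity of $\nu \mapsto \nu(x)$ with weak compactness of $\B_\Wass(\wh \nu, \eps)$ and Weierstrass; the only difference is that you prove compactness by hand (tightness via the triangle inequality plus closedness via lower semicontinuity of $\Wass$) whereas the paper cites Pichler--Shapiro. Your hands-on version needs one extra word of care: the uniform moment bound controls mass outside \emph{bounded} sets, and you must argue these are relatively compact in the relevant space and that $\M(\X)$ survives the weak limit --- routine when $\X$ is discrete and closed in $\R^m$, but worth stating. For the support structure, the paper argues by contradiction: it takes the optimal transport plan from $\wh\nu$ to an optimizer $\bar\nu$, and whenever mass leaks outside $\supp(\wh\nu)\cup\{x\}$ it reroutes a carefully calibrated fraction $\min\{1, \eta_j/d(x,\wh x_j)\}$ of that mass onto $x$ (and the remainder back to $\wh x_j$), producing a \emph{strictly} better feasible measure --- which forces a case distinction on whether $x \in \supp(\wh\nu)$ and some $\bar\epsilon$-bookkeeping. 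You instead perform a \emph{weak}-improvement surgery: send all leaked mass back to its source $\wh x_j$ at zero transport cost, which preserves the marginal on $\wh\nu$, can only decrease the cost, keeps $\nu'(x) \ge \nu\opt_\Wass(x)$, and lands the first marginal inside $\supp(\wh\nu)\cup\{x\}$. Since the proposition only asserts the \emph{existence} of an optimizer with the stated support, this weaker conclusion suffices, and your construction avoids both the case analysis and the calibration of how much mass to push onto $x$; the paper's stronger strict-domination argument buys the additional (unclaimed) fact that \emph{every} optimizer leaking mass outside the target support is suboptimal.
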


Leveraging Proposition~\ref{prop:structure:Wass}, we can show that the optimistic likelihood estimate over the Wasserstein ambiguity set coincides with the optimal value of a linear program whose number of decision variables equals the number of atoms $N$ of the nominal measure $\wh \nu$.

\begin{theorem}[Optimistic likelihood; Wasserstein ambiguity] \label{thm:Wass}
	For any $\eps \ge 0$ and $x \in \X$, problem~\eqref{eq:prob:Wass} is equivalent to the linear program
	\be \label{eq:Wass}
		\Sup{ \nu \in \mbb B_\Wass(\wh \nu, \eps)} \; \nu(x) = \max \left\{ \ds \sum_{j \in [N]} T_{j} : \, T \in \R^N_+, \, \ds \sum_{j \in [N]} d(x, \wh x_{j})\, T_{j} \leq \eps, \, T_{j} \leq \wh \nu_{j}  \; \forall j \in [N] \right\}.
	\ee
\end{theorem}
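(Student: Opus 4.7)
The plan is to establish the claimed equality by a pair of inequalities, reading $T_j$ as the mass that a transport plan between $\wh\nu$ and $\nu$ sends from the atom $\wh x_j$ onto the evaluation point $x$. Under this interpretation, $T_j\le\wh\nu_j$ says that one cannot move more mass out of $\wh x_j$ than it carries, and $\sum_j d(x,\wh x_j)T_j\le\eps$ says that the total transportation cost incurred by relocating mass to $x$ fits inside the Wasserstein budget. Proposition~\ref{prop:structure:Wass} lets me restrict attention to measures supported on $\supp(\wh\nu)\cup\{x\}$, turning both sides of the identity into a finite-dimensional bookkeeping exercise.

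For the inequality ``LP value $\le$ optimistic likelihood,'' I would take any feasible $T\in\R^N_+$ and exhibit the candidate
\[
\nu \,\Let\, \sum_{j\in[N]}(\wh\nu_j - T_j)\,\delta_{\wh x_j} \;+\; \Big(\sum_{j\in[N]} T_j\Big)\,\delta_x,
\]
together with the coupling $\lambda$ on $\X\times\X$ whose only nonzero weights are $\lambda(\wh x_j,\wh x_j)=\wh\nu_j-T_j$ and $\lambda(\wh x_j,x)=T_j$. A short verification shows that $\nu$ is a probability mass function, that $\lambda$ has marginals $\wh\nu$ and $\nu$, and that the expected transport cost of $\lambda$ equals $\sum_j d(x,\wh x_j)T_j\le\eps$; hence $\nu\in\B_\Wass(\wh\nu,\eps)$ with $\nu(x)=\sum_j T_j$.

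For the reverse inequality, I would invoke the optimizer $\nu_\Wass\opt$ provided by Proposition~\ref{prop:structure:Wass} together with an optimal coupling $\lambda\opt$ certifying $\Wass(\nu_\Wass\opt,\wh\nu)\le\eps$, and I would define $T_j\Let\lambda\opt(\{\wh x_j\}\times\{x\})$. The first-marginal condition on $\lambda\opt$ yields $T_j\le\wh\nu_j$; the second-marginal condition, together with the fact that $\lambda\opt$ is supported on $\supp(\wh\nu)\times\X$, yields $\sum_j T_j=\nu_\Wass\opt(x)$; and discarding from the total transport cost the non-negative terms whose second coordinate differs from $x$ yields $\sum_j d(x,\wh x_j)T_j\le\eps$. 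Thus $T$ is LP-feasible with objective equal to the optimistic likelihood.

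The only delicate point I anticipate is the case $x\in\supp(\wh\nu)$, say $x=\wh x_k$, where the atom at $\wh x_k$ in the candidate measure coincides with the atom at $x$. Here one uses $d(x,\wh x_k)=0$ to set $T_k=\wh\nu_k$ at zero cost, so that the combined mass $(\wh\nu_k-T_k)+\sum_j T_j=\wh\nu_k+\sum_{j\ne k}T_j$ simultaneously matches $\nu(x)$ and the LP objective $\sum_j T_j$. Outside of this bookkeeping, the argument is a clean change of variables between probability measures supported on $\supp(\wh\nu)\cup\{x\}$ and non-negative transport plans whose first marginal matches $\wh\nu$.
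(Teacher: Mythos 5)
Your proposal is correct and follows essentially the same route as the paper: both rest on Proposition~\ref{prop:structure:Wass} to restrict to measures supported on $\supp(\wh\nu)\cup\{x\}$ and on the identification of $T_j$ with the mass a coupling sends from $\wh x_j$ to $x$. The only difference is presentational --- you establish the identity as a two-sided inequality with explicit feasible constructions, whereas the paper writes the full transportation LP over $(y,\lambda)$ and then argues that the off-diagonal entries $\lambda_{jj'}$, $j\neq j'$, may be set to zero before eliminating variables; your version makes that compression explicit but adds no new idea.
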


The currently best complexity bound for solving a general linear program with $N$ decision variables is $\mc O(N^{2.37})$ \cite{ref:cohen2018solvingLP}, which may be prohibitive when $N$ is large. Fortunately, the linear program~\eqref{eq:Wass} can be solved to optimality using a greedy heuristics in quasilinear time.

\begin{proposition}[Optimal solution via greedy heuristics]\label{lemma:greedy}
	The linear program~\eqref{eq:Wass} can be solved to optimality by a greedy heuristics in time $\mc O(N\log N)$.
\end{proposition}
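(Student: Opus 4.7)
The plan is to recognize problem~\eqref{eq:Wass} as a continuous (fractional) knapsack problem in disguise. We maximize $\sum_{j \in [N]} T_j$ (with unit value per index) subject to a single linear budget with costs $c_j = d(x, \wh x_j)$ and individual capacities $T_j \leq \wh \nu_j$. Since all values coincide, the value-to-cost ratio $1/c_j$ is largest for the atoms closest to $x$, motivating the greedy heuristic I will analyze: sort the indices so that $d(x, \wh x_{\pi(1)}) \leq d(x, \wh x_{\pi(2)}) \leq \cdots \leq d(x, \wh x_{\pi(N)})$, then set $T_{\pi(j)}^\star = \wh \nu_{\pi(j)}$ for every prefix index whose cumulative cost $\sum_{i \leq j} d(x, \wh x_{\pi(i)}) \wh \nu_{\pi(i)}$ stays within $\eps$. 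At the first critical index $k$ at which the budget would be violated, set $T_{\pi(k)}^\star = \bigl(\eps - \sum_{i < k} d(x, \wh x_{\pi(i)}) \wh \nu_{\pi(i)}\bigr) / d(x, \wh x_{\pi(k)})$ and $T_{\pi(j)}^\star = 0$ for all $j > k$; if no critical index exists, return $T^\star = \wh \nu$.

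To certify optimality I would appeal to LP duality. A direct Lagrangian calculation shows that the dual of~\eqref{eq:Wass} is
\[
\min \left\{ \eps \lambda + \sum_{j \in [N]} \wh \nu_j \mu_j :\; \lambda \geq 0,\; \mu \in \R^N_+,\; d(x, \wh x_j) \lambda + \mu_j \geq 1 \;\; \forall j \in [N] \right\}.
\]
For the greedy primal $T^\star$ with critical index $k$, the dual candidate I would construct is $\lambda^\star = 1/d(x, \wh x_{\pi(k)})$ together with $\mu_{\pi(j)}^\star = \max\{0,\, 1 - d(x, \wh x_{\pi(j)}) \lambda^\star\}$. Dual feasibility follows from the sorted ordering: for $j \leq k$ one has $d(x, \wh x_{\pi(j)}) \lambda^\star \leq 1$, so $\mu_{\pi(j)}^\star = 1 - d(x, \wh x_{\pi(j)}) \lambda^\star \geq 0$ closes the dual constraint; for $j > k$ one has $d(x, \wh x_{\pi(j)}) \lambda^\star \geq 1$, so $\mu_{\pi(j)}^\star = 0$ suffices. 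Plugging these values into the dual objective and using the defining equation of $T_{\pi(k)}^\star$ would then yield $\sum_{j < k} \wh \nu_{\pi(j)} + T_{\pi(k)}^\star = \sum_{j \in [N]} T_j^\star$; by weak duality this matching value certifies optimality. Equivalently, one could verify complementary slackness directly: the upper bounds $T_j \leq \wh \nu_j$ are tight exactly where $\mu_j^\star > 0$, the budget is tight because $\lambda^\star > 0$, and the nonnegativity constraints bind on $\{j > k\}$ where the dual inequality is slack.

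The complexity bound then follows because the sort is $\mc O(N \log N)$ while both the greedy sweep and the dual certificate construction are $\mc O(N)$, giving $\mc O(N \log N)$ overall. The main obstacle is the careful bookkeeping of a few degenerate situations: when $x$ coincides with an atom so that $d(x, \wh x_j) = 0$ (such atoms are peeled off at zero cost in a preprocessing step), when the total cost $\sum_j d(x, \wh x_j) \wh \nu_j$ already fits inside $\eps$ (no critical index exists, and the certificate $\lambda^\star = 0$, $\mu_j^\star = 1$ matches the primal value $\sum_j \wh \nu_j = 1$), and when ties in the sorted distances render the critical index $k$ non-unique. Once these corner cases are dispatched, the dual certificate above closes the argument.
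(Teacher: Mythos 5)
Your proposal is correct and follows essentially the same route as the paper: both identify \eqref{eq:Wass} as a continuous (fractional) knapsack problem whose value-to-cost ratios are $1/d(x,\wh x_j)$, so the greedy that sorts atoms by distance to $x$ and fills the budget in that order is optimal, with the sort dominating the $\mc O(N\log N)$ runtime. The only difference is that the paper cites the classical optimality of the greedy for fractional knapsack (Dantzig), whereas you re-derive it via an explicit dual certificate and complementary slackness, which is a valid self-contained substitute.
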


\begin{minipage}[t]{0.48\textwidth}
	\centering
	\raisebox{\dimexpr \topskip-\height}{%
		\includegraphics[width=\columnwidth]{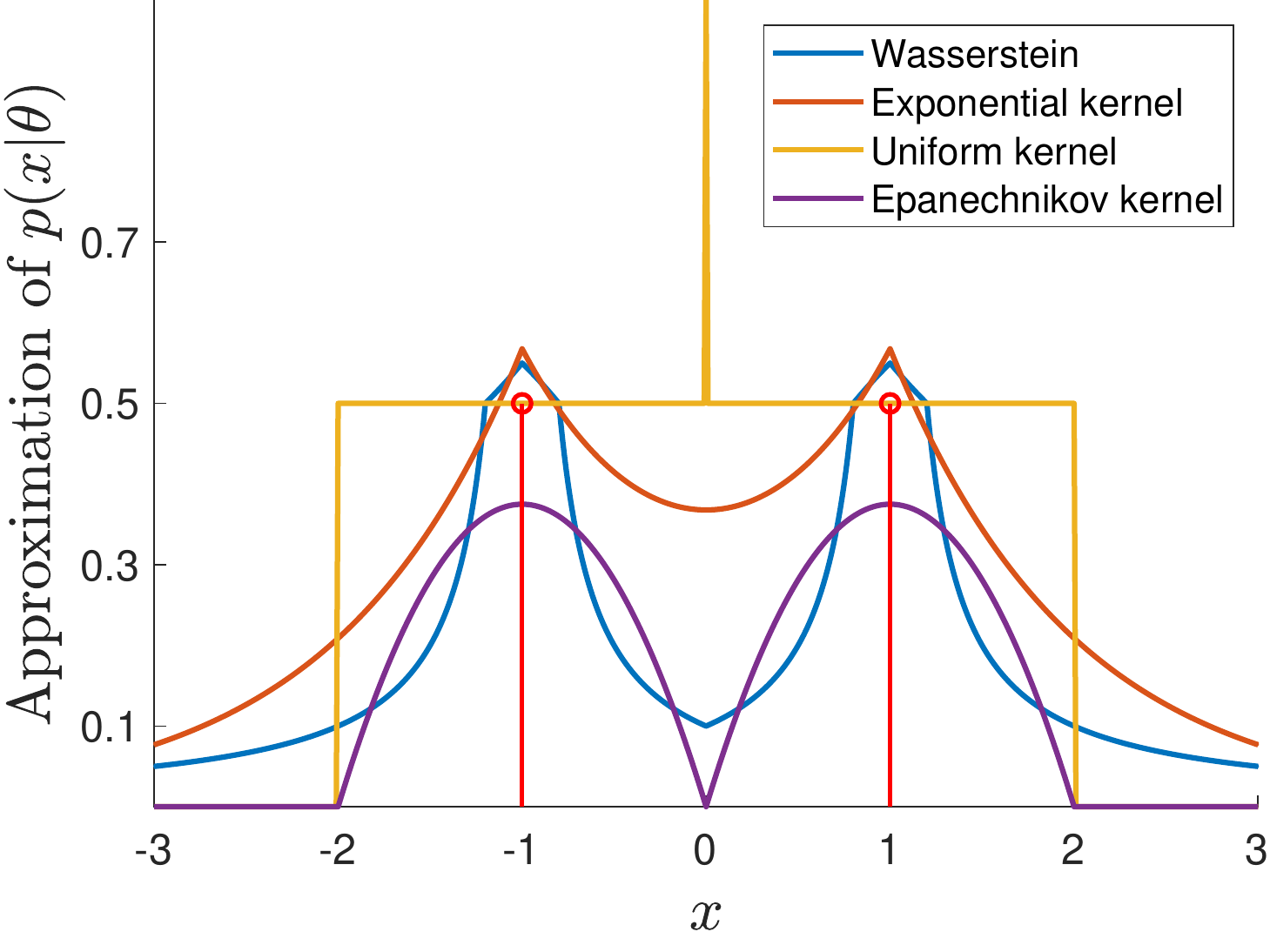}}
	\captionsetup{skip=0pt}
	\captionof{figure}{Comparison between the Wasserstein approximation ($\eps = 0.2$) and the sample average kernel approximations $(h=1)$ of $p(x|\theta)$.}
	\label{fig:Wass}
\end{minipage} ~
\begin{minipage}[t]{0.5\textwidth}
	\vspace{-1em}
	\begin{example}[Qualitative comparison with kernel methods]\label{example} Let $m = 1$, $d(x, \wh x) = \| x - \wh x \|_1$ and $\wh \nu = 0.5 \delta_{-1} + 0.5 \delta_1$. Figure~\ref{fig:Wass} compares the approximation of $p(x|\theta)$ by the Wasserstein optimistic likelihood with those of the finite sample kernel approximations~\eqref{eq:p:approx} with $K_h(u) = K\left( h^{-1} u \right)$, where the Kernel $K$ is exponential with $K(y) = \exp(-y)$, uniform with $K(y) = \mathbbm{1}[|y| \leq 1]$ or Epanechnikov with $K(y) = 3/4 (1 - y^2) \mathbbm{1}[|y| \leq 1]$. While both the uniform and the Epachnechnikov kernel may produce an approximation value of 0 when $x$ is far away from the support of $\wh \nu$, the Wasserstein approximation always returns a positive likelihood when $\eps > 0$ (see Corollary~\ref{corol:Wass:comparative}). Qualitatively, the Wasserstein approximation exhibits a decay pattern similar to that of the finite sample average exponential kernel approximation.
	\end{example}
\end{minipage}

On one hand, the similarity between the optimistic likelihood over the Wasserstein ambiguity set and the exponential kernel approximation suggests that the kernel approximation can potentially be interpreted in the light of our optimistic distributionally robust optimization framework. On the other hand, and perhaps more importantly, this similarity suggests that there are possibilities to design novel and computationally efficient kernel-like approximations using advanced optimization techniques. Even though the assumption that $p(\cdot | \theta)$ is a probability mass function is fundamental for our approximation, we believe that our approach can be utilized in the ABC setting even when $p(\cdot | \theta)$ is a probability density function. We leave these ideas for future research.

Appendix~\ref{sect:moment-vs-Wass} illustrates further how the Wasserstein ambiguity set offers a better tail approximation of the nominal measure $\wh \nu$ than the ambiguity set based on moment conditions. Interestingly, the Wasserstein approximation can also be generalized to approximate the log-likelihood of a batch of i.i.d.~observations, see Appendix~\ref{sect:multiple}

\section{Application to the ELBO Problem}
\label{sect:guarantee}

Motivated by the fact that the likelihood $p(x|\theta)$ is intractable to compute in many practical applications, we use our optimistic likelihood approximation~\eqref{eq:likelihood} as a surrogate for $p(x|\theta)$ in the ELBO problem~\eqref{eq:elbo}. In this section, we will focus on the KL divergence and the Wasserstein ambiguity sets, and we will impose the following assumptions.

\begin{assumption}[Finite parameter space] \label{ass:finite}
	We assume that $\Theta = \{\theta_1, \ldots, \theta_C\}$ for some $C \ge 2$. 
\end{assumption}

\begin{assumption}[I.i.d.~sampling and empirical distribution] \label{ass:iid-sampling}
	For every $i \in [C]$, we have $N_i$ i.i.d.~samples	$\wh x_{ij}$, $j \in [N_i]$, from the conditional probability $p(\cdot| \theta_i)$. Furthermore, each nominal distribution $\wh \nu_i$ is given by the empirical distribution $\wh \nu_i^{N_i} = N_i^{-1} \sum_{j \in [N_i]} \delta_{\wh x_{ij}}$ on the samples $\wh x_{ij}$.
\end{assumption}

Assumption~\ref{ass:finite} is necessary for our approach because we approximate $p(x | \theta)$ separately for every $\theta \in \Theta$. Under this assumption, the prior distribution $\pi$ can be expressed by the $C$-dimensional vector $\pi \in \R_+$, and the ELBO program~\eqref{eq:elbo} becomes the finite-dimensional convex optimization problem
\be \label{eq:best:full2}
\begin{array}{cl}
	\mc J^{\text{true}} = \Min{q \in \mc Q} & \ds \sum_{i\in [C]} q_i (\log q_i - \log \pi_i) - \sum_{i\in [C]} q_i \log p(x | \theta_i),
\end{array}
\ee
where by a slight abuse of notation, $\mc Q$ is now a subset of the $C$-dimensional simplex. Assumption~\ref{ass:iid-sampling}, on the other hand, is a standard assumption in the nonparametric setting, and it allows us to study the statistical properties of our optimistic likelihood approximation.

We approximate $p(x | \theta_i)$ for each $\theta_i$ by the optimal value of the optimistic likelihood problem~\eqref{eq:likelihood}:
\be \label{eq:approx}
	p(x | \theta_i) \approx \Sup{\nu_i \in \B_i^{N_i}(\wh \nu_i^{N_i})} \nu_i(x)
\ee 
Here, $\B_i^{N_i}(\wh \nu_i^{N_i})$ is the KL divergence or Wasserstein ambiguity set centered at the empirical distribution $\wh \nu_i^{N_i}$.  Under Assumptions~\ref{ass:finite} and \ref{ass:iid-sampling}, a surrogate model of the ELBO problem~\eqref{eq:elbo} is then obtained using the approximation~\eqref{eq:approx} as
\be \label{eq:best:full}
\begin{array}{cl}
	\wh \J_{\B^N} = \Min{q \in \mc Q} & \ds \sum_{i \in [C]} q_i (\log q_i - \log \pi_i) - \sum_{i \in [C]} q_i \log \left(\Sup{ \nu_i \in \B_i^{N_i}(\wh \nu_i^{N_i})}  \nu_i(x) \right),
\end{array}
\ee
where we use $\B^N$ to denote the collection of ambiguity sets $\big\{ \B_i^{N_i}(\wh \nu_i^{N_i}) \big\}_{i=1}^C$ with $N = \sum_{i} N_i$. 
	
We now study the statistical properties of problem~\eqref{eq:best:full}. We first present an asymptotic guarantee for the KL divergence. Towards this end, we define the \emph{disappointment} as $\PP^\infty( \mc J^{\text{true}} < \wh \J_{\B^N} )$.

\begin{theorem}[Asymptotic guarantee; KL ambiguity] \label{thm:asymptotic-KL}
	Suppose that Assumptions~\ref{ass:finite} and \ref{ass:iid-sampling} hold. For each $i \in [C]$, let $\B_i^{N_i}(\wh \nu_i^{N_i}) = \mbb B_{\text{KL}} (\wh \nu_{i}^{N_i}, \eps_i)$ for some $\eps_i > 0$, and set $n \Let \min \{N_1, \ldots, N_C\}$. We then have
	\begin{align*}
	\limsup_{n \to \infty} \frac{1}{n} \log \PP^\infty(\mc J^{\text{true}} < \wh \J_{\B^N} ) \;\; \leq \;\; - \min_{i\in [C] } \eps_i \;\; < \;\; 0.
	\end{align*}
\end{theorem}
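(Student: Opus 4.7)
The plan is to exploit the optimism built into the approximation. Whenever the true conditional $p(\cdot|\theta_i)$ happens to belong to its KL ball $\mbb B_{\KL}(\wh \nu_i^{N_i}, \eps_i)$ for every class $i$, the surrogate $\wh\J_{\B^N}$ cannot exceed $\mc J^{\text{true}}$, so the disappointment event is contained in the event that at least one empirical measure is too far in KL from its truth. Sanov's theorem then supplies an exponential tail bound for this simpler event.

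The first step is the containment argument. Suppose $p(\cdot|\theta_i) \in \mbb B_{\KL}(\wh \nu_i^{N_i}, \eps_i)$ for every $i \in [C]$. Then, by definition of the optimistic likelihood, $\Sup{\nu_i \in \mbb B_{\KL}(\wh \nu_i^{N_i}, \eps_i)} \nu_i(x) \geq p(x|\theta_i)$, so $-\log \Sup{\nu_i} \nu_i(x) \leq -\log p(x|\theta_i)$. Since the simplex weights $q_i$ in $\mc Q$ are non-negative, the objective of~\eqref{eq:best:full} is pointwise no larger than that of~\eqref{eq:best:full2}, and taking the infimum over $q \in \mc Q$ on both sides gives $\wh\J_{\B^N} \leq \mc J^{\text{true}}$. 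Taking contrapositives yields
\[
\{ \mc J^{\text{true}} < \wh\J_{\B^N} \} \;\subseteq\; \bigcup_{i \in [C]} \big\{ \KL\big(\wh \nu_i^{N_i} \parallel p(\cdot|\theta_i)\big) > \eps_i \big\}.
\]

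The second step appeals to Sanov's theorem. Under Assumption~\ref{ass:iid-sampling}, the empirical measure $\wh \nu_i^{N_i}$ satisfies a large deviations principle on the Polish space of probability measures on the countable alphabet $\X$ (equipped with the weak topology) with good rate function $\mu \mapsto \KL(\mu \parallel p(\cdot|\theta_i))$. Because the set $F_i = \{\mu : \KL(\mu \parallel p(\cdot|\theta_i)) \geq \eps_i\}$ is weakly closed by lower semi-continuity of relative entropy and $\inf_{\mu \in F_i} \KL(\mu \parallel p(\cdot|\theta_i)) = \eps_i$, the LDP upper bound gives
\[
\limsup_{N_i \to \infty} \frac{1}{N_i} \log \PP\big(\KL(\wh \nu_i^{N_i} \parallel p(\cdot|\theta_i)) > \eps_i\big) \;\leq\; -\eps_i.
\]
Combining this with a union bound over $i \in [C]$ and using $N_i \geq n$, for every $\delta > 0$ and all sufficiently large $n$ we get
\[
\PP^\infty\big(\mc J^{\text{true}} < \wh\J_{\B^N}\big) \;\leq\; \sum_{i \in [C]} \exp\big(-N_i(\eps_i - \delta)\big) \;\leq\; C\,\exp\big(-n(\min_{j\in[C]} \eps_j - \delta)\big).
\]
Taking logarithms, dividing by $n$, and sending first $n \to \infty$ and then $\delta \to 0$ delivers the stated bound $-\min_i \eps_i < 0$.

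The main obstacle I anticipate is the clean invocation of Sanov's theorem when $\X$ is countable but possibly infinite: I would cite the standard reference (e.g., Dembo and Zeitouni) to guarantee that the LDP holds with a good rate function on the weak topology and that the infimum of the rate over $F_i$ is indeed attained at the value $\eps_i$. A minor subtlety is the absolute-continuity requirement in Definition~\ref{def:KL}, which holds almost surely since $\supp(\wh \nu_i^{N_i}) \subseteq \supp(p(\cdot|\theta_i))$. Beyond these points, the argument is a routine synthesis of optimism, a union bound, and exponential decay.
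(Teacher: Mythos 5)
Your proof follows essentially the same route as the paper's: the optimism/containment argument (if each true conditional lies in its KL ball, then $\wh \J_{\B^N} \le \mc J^{\text{true}}$) reduces the disappointment event to the event that some empirical measure exits its KL ball around the truth, Sanov's theorem supplies the exponential rate $\eps_i$, and a union bound over the $C$ classes together with $N_i \ge n$ finishes the argument. One small quibble: lower semicontinuity of $\KL(\cdot \parallel \nu)$ makes its \emph{sublevel} sets closed, not its superlevel sets, so your stated reason for $F_i$ being weakly closed is backwards --- but the large-deviations upper bound you need is the standard one the paper also invokes (Dembo--Zeitouni, Theorem~6.2.10), so the conclusion stands.
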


Theorem~\ref{thm:asymptotic-KL} shows that as the number of training samples $N_i$ for each $i \in [C]$ grows, the disappointment decays exponentially at a rate of at least $\min_{i} \eps_i$.

We next study the statistical properties of problem \eqref{eq:best:full} when each $\B_i^{N_i}(\wh \nu_i^{N_i})$ is a Wasserstein ball. To this end, we additionally impose the following assumption, which essentially requires that the tail of each distribution $p(\cdot | \theta_i)$, $i \in [C]$, decays at an exponential rate.

\begin{assumption}[Light-tailed conditional distribution] \label{ass:light-tail}
	For each $i \in [C]$, there exists an exponent $a_i > 1$ such that $ A_i \Let \EE [ \exp(\| x \|^{a_i})] < \infty$,
	where the expectation is taken with respect to $p(\cdot | \theta_i)$.
\end{assumption}

\begin{theorem}[Finite sample guarantee; Wasserstein ambiguity] \label{thm:finite-Wass}
	Suppose that Assumptions~\ref{ass:finite}, \ref{ass:iid-sampling} and \ref{ass:light-tail} hold, and fix any $\beta \in (0, 1)$. Assume that $m \neq 2$ and that $\B_i^{N_i}(\wh \nu_i^{N_i}) = \B_\Wass (\wh \nu_i^{N_i}, \eps_i(\beta, C, N_i))$ for every $i \in [C]$ with
	\[
		\eps_i(\beta, C, N_i) \Let \begin{cases}
			\left( \frac{ \log(k_{i1} C \beta^{-1}) }{k_{i2} N_i} \right)^{1/\max\{m, 2\}} & \text{ if } N_i \ge  \frac{\log(k_{i1}) C \beta^{-1} }{k_{i2}}, \\
			\left( \frac{ \log(k_{i1} C \beta^{-1}) }{k_{i2} N_i} \right)^{1/a_i} & \text{ if } N_i < \frac{\log(k_{i1}) C \beta^{-1} }{k_{i2}},
		\end{cases}
	\] 
	and $k_{i1}, k_{i2}$ are positive constants that depend on $a_i, A_i$ and $m$. We then have~$\PP^N \big(  \mc J^{\text{true}} < \wh \J_{\B^N} \big) \le \beta$.
\end{theorem}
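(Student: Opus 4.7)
The plan is to combine a concentration argument for the empirical measure in Wasserstein distance with a pointwise monotonicity property of the optimistic ELBO surrogate. Concretely, I will show that with probability at least $1-\beta$ the true class-conditional distributions $p(\cdot|\theta_i)$ simultaneously lie inside their respective Wasserstein balls $\B_\Wass(\wh \nu_i^{N_i}, \eps_i)$; on this event the surrogate's ambiguity set contains the true likelihood measure for every $i$, which forces $\wh \J_{\B^N} \le \mc J^{\text{true}}$.

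First I would establish the following deterministic implication: if $p(\cdot|\theta_i) \in \B_\Wass(\wh \nu_i^{N_i}, \eps_i)$ for all $i\in [C]$, then $\wh \J_{\B^N} \le \mc J^{\text{true}}$. The proof of this step is immediate: for each $i$, the feasibility of $p(\cdot|\theta_i)$ in the ambiguity set yields
\[
p(x|\theta_i) \;\le\; \Sup{\nu_i \in \B_\Wass(\wh \nu_i^{N_i}, \eps_i)} \nu_i(x),
\]
so that $-\log p(x|\theta_i) \ge -\log \bigl(\sup_{\nu_i} \nu_i(x)\bigr)$. Multiplying by $q_i \ge 0$ and summing over $i$ shows that the objective of~\eqref{eq:best:full2} dominates the objective of~\eqref{eq:best:full} at every feasible $q\in \mc Q$; taking the minimum over $q$ on both sides gives $\wh \J_{\B^N} \le \mc J^{\text{true}}$. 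Contrapositively, $\{\mc J^{\text{true}} < \wh \J_{\B^N}\}$ implies that $p(\cdot|\theta_i) \notin \B_\Wass(\wh \nu_i^{N_i}, \eps_i)$ for some $i$.

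Next I would calibrate the radii via a concentration inequality for the empirical measure in $\Wass$-distance under the exponential moment condition of Assumption~\ref{ass:light-tail}. The relevant tool is the Fournier--Guillin concentration bound, which states that for $m\neq 2$ there exist constants $k_{i1},k_{i2}>0$ depending only on $a_i$, $A_i$ and $m$ such that
\[
\PP^{N_i}\!\bigl(\Wass(p(\cdot|\theta_i),\wh \nu_i^{N_i}) \ge r\bigr) \;\le\; k_{i1}\exp\!\bigl(-k_{i2} N_i\, r^{\max\{m,2\}}\bigr) \quad\text{if } r\le 1,
\]
with the analogous bound replacing the exponent $\max\{m,2\}$ by $a_i$ in the regime $r>1$. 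Setting the right-hand side equal to $\beta/C$ and inverting for $r$ yields exactly the piecewise definition of $\eps_i(\beta,C,N_i)$ in the theorem statement, with the two cases corresponding to whether the implied $r$ is below or above $1$. Hence $\PP^{N_i}\!\bigl(p(\cdot|\theta_i)\notin \B_\Wass(\wh \nu_i^{N_i},\eps_i)\bigr) \le \beta/C$ for each $i$, and a union bound over $i\in[C]$ combined with Assumption~\ref{ass:iid-sampling} (so the samples across classes are mutually independent) bounds the overall failure probability by $\beta$. Combining this with the deterministic implication from the previous paragraph completes the proof.

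The main obstacle is the concentration step: one must verify that the Fournier--Guillin bound applies in the present setting (the exponential moment Assumption~\ref{ass:light-tail} with $a_i>1$ is precisely designed for this), identify the constants $k_{i1},k_{i2}$ as depending only on $a_i,A_i,m$, and invert the tail bound carefully to match the piecewise formula for $\eps_i(\beta,C,N_i)$, paying attention to the case $m\neq 2$ which avoids the logarithmic factor that appears at $m=2$. The rest of the argument is a clean monotonicity and union-bound calculation.
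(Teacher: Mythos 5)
Your proposal is correct and follows essentially the same route as the paper's proof: invoke the Fournier--Guillin concentration bound to calibrate each radius so that $\PP^{N_i}\bigl(\Wass(p(\cdot|\theta_i),\wh\nu_i^{N_i})\ge\eps_i\bigr)\le\beta/C$, apply a union bound over the $C$ classes, and conclude $\wh\J_{\B^N}\le\mc J^{\text{true}}$ on the event that every true conditional lies in its ambiguity set. In fact you spell out the deterministic monotonicity step (feasibility of $p(\cdot|\theta_i)$ implies the surrogate objective is pointwise dominated, hence so is its minimum) more explicitly than the paper, which states this conclusion without elaboration.
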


Theorem~\ref{thm:finite-Wass} provides a finite sample guarantee for the disappointment of problem~\eqref{eq:best:full} under a specific choice of radii for the Wasserstein balls.

\begin{theorem}[Asymptotic guarantee for Wasserstein] \label{thm:asym-Wass}
	Suppose that Assumptions~\ref{ass:finite}, \ref{ass:iid-sampling} and \ref{ass:light-tail} hold.
	For each $i \in [C]$, let $\beta_{N_i} \in (0,1)$ be a sequence such that $\sum_{N_i = 1}^\infty \beta_{N_i} < \infty$ and $\B_i^{N_i}(\wh \nu_i^{N_i}) = \B_\Wass (\wh \nu_i^{N_i}, \eps_i(\beta_{N}, C, N_i))$, where $\eps_i$ is defined as in Theorem~\ref{thm:finite-Wass}. Then $\wh \J_{\B^N} \rightarrow \mc J^{\text{true}} \; \text{as } N_1,\dots,N_C \rightarrow \infty$ almost surely.
\end{theorem}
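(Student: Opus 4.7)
My plan is to prove $\wh\J_{\B^N} \to \J^{\text{true}}$ almost surely by establishing separately that $\limsup_N \wh\J_{\B^N} \le \J^{\text{true}}$ and $\liminf_N \wh\J_{\B^N} \ge \J^{\text{true}}$ with probability one, combining a Borel--Cantelli argument with a consistency argument for the optimistic likelihoods themselves.

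The upper bound is the easier half and essentially reuses the finite-sample guarantee. With the radii $\eps_i(\beta_N,C,N_i)$ prescribed by Theorem~\ref{thm:finite-Wass}, that theorem gives $\PP^N(\J^{\text{true}} < \wh\J_{\B^N}) \le \beta_N$ for every sample size $N$. Since $\sum_N \beta_N < \infty$ by hypothesis, the first Borel--Cantelli lemma implies that the event $\{\J^{\text{true}} < \wh\J_{\B^N}\}$ occurs only finitely often almost surely, so that $\wh\J_{\B^N} \le \J^{\text{true}}$ for all sufficiently large $N$ almost surely, yielding $\limsup_N \wh\J_{\B^N} \le \J^{\text{true}}$.

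For the reverse direction, I first observe that summability of $\beta_N$ forces $\beta_N \to 0$ and hence $\eps_i(\beta_N,C,N_i) \to 0$ as $N_i \to \infty$. By the strong law of large numbers applied to the indicator of $\{x\}$, $\wh\nu_i^{N_i}(x) \to p(x|\theta_i)$ almost surely. Using the linear programming reformulation in Theorem~\ref{thm:Wass}, I would then sandwich the optimistic likelihood as
\[
\wh\nu_i^{N_i}(x) \;\le\; \sup_{\nu \in \B_\Wass(\wh\nu_i^{N_i},\,\eps_i)} \nu(x) \;\le\; \wh\nu_i^{N_i}(B(x,r)) + \eps_i/r \qquad \text{for every } r > 0,
\]
where the upper estimate follows by splitting the feasible LP variables $T_j$ according to whether $d(x,\wh x_{ij}) \le r$ and invoking the Wasserstein budget constraint. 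Taking $N_i \to \infty$ first and then $r \to 0$ (or choosing $r = r_{N_i}$ shrinking more slowly than $\eps_i$) shows that $\sup_{\nu \in \B_\Wass(\wh\nu_i^{N_i},\eps_i)} \nu(x) \to p(x|\theta_i)$ almost surely for each $i$. Since the surrogate ELBO objective in~\eqref{eq:best:full} is convex in $q$ on the compact simplex $\mc Q$ and converges pointwise to the true ELBO objective in~\eqref{eq:best:full2}, a standard convergence-of-minima argument (based, for instance, on Rockafellar's uniform-convergence theorem for finite-valued convex functions) yields $\liminf_N \wh\J_{\B^N} \ge \J^{\text{true}}$ almost surely.

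The main technical obstacle is the degenerate regime in which $p(x|\theta_i)=0$ for some $i$, where $\log p(x|\theta_i)=-\infty$ while the surrogate $\log\sup\nu(x)$ may remain finite. Handling this regime requires the additional step of showing that $\sup_{\nu \in \B_\Wass(\wh\nu_i^{N_i},\eps_i)} \nu(x) \to 0$ whenever $x \notin \supp(p(\cdot|\theta_i))$; the LP sandwich above yields this provided $x$ is not an accumulation point of atoms of $p(\cdot|\theta_i)$. When $\J^{\text{true}}$ takes the value $+\infty$, the convergence of infima is then framed via $\Gamma$-convergence of the surrogate objectives or by restricting attention to the face of $\mc Q$ on which the limiting objective is finite, so that the two infima agree in the limit.
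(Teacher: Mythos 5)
Your argument is correct in substance but follows a genuinely different route from the paper's. The paper does not split into $\limsup$/$\liminf$ bounds: it takes an optimizer $\nu_i^\star$ of each optimistic likelihood problem, invokes the asymptotic consistency lemma of Mohajerin Esfahani and Kuhn together with Villani's Theorem~6.9 to conclude that $\nu_i^\star$ converges weakly to $\nu_i^{\text{true}}$ almost surely, deduces $\nu_i^\star(x)\to p(x|\theta_i)$ from upper semicontinuity of $\mathbbm{1}_x$, and then transfers this to the optimal values via a perturbation-analysis result (Bonnans--Shapiro) establishing continuity of the parametrized value function $\J^\star(u)$ on $[\underline u,1]^C$. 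Your proof replaces the weak-convergence machinery with two self-contained ingredients: a Borel--Cantelli argument on top of Theorem~\ref{thm:finite-Wass} for the one-sided bound $\wh\J_{\B^N}\le\mc J^{\text{true}}$ eventually, and the elementary LP sandwich $\wh\nu_i^{N_i}(x)\le\sup_{\nu\in\B_\Wass(\wh\nu_i^{N_i},\eps_i)}\nu(x)\le\wh\nu_i^{N_i}(\bar B(x,r))+\eps_i/r$, which is a correct consequence of the reformulation in Theorem~\ref{thm:Wass} and, combined with the strong law and $\nu_i^{\text{true}}(\bar B(x,r))\downarrow p(x|\theta_i)$ along a countable sequence $r\downarrow 0$, gives consistency of the optimistic likelihood directly. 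Your route is more elementary and quantitative; the paper's is shorter by leaning on cited results. Once each $\log$ term converges to a finite limit, the two ELBO objectives differ by $\sum_i q_i\delta_i$ with $\delta_i\to 0$, so uniform convergence of minima over the compact simplex is immediate and the appeal to Rockafellar is unnecessary.

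Two caveats. First, your inference ``$\beta_N\to 0$, hence $\eps_i(\beta_N,C,N_i)\to 0$'' is not valid as stated: since $\eps_i$ grows with $\log(\beta_{N_i}^{-1})$, one additionally needs $\log(\beta_{N_i}^{-1})=o(N_i)$ (e.g., $\beta_{N_i}=\exp(-N_i^2)$ is summable yet drives $\eps_i\to\infty$). The paper's proof carries the same implicit requirement, since the consistency lemma it cites assumes vanishing radii, so this is a shared gap rather than one specific to your argument; still, you should state the extra condition rather than derive it from summability. Second, the degenerate regime $p(x|\theta_i)=0$ that you flag as the main obstacle is simply assumed away in the paper (it asserts $(u^{\text{true}})_i>0$ for all $i$ without comment); your $\Gamma$-convergence sketch for that regime is speculative and would need real work, but it is not needed to match the paper's result under the same implicit positivity assumption.
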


Theorem~\ref{thm:asym-Wass} offers an asymptotic guarantee which asserts that as the numbers of training samples $N_i$ grow, the optimal value of \eqref{eq:best:full} converges to that of the ELBO problem~\eqref{eq:best:full2}.

\section{Numerical Experiments}
\label{sect:numerical}

We first showcase the performance guarantees from the previous section on a synthetic dataset in Section~\ref{sect:exp:synthetic}. Afterwards, Section~\ref{sect:exp:real} benchmarks the performance of the different likelihood approximations in a probabilistic classification task on standard UCI datasets. The source code, including our algorithm and all tests implemented in Python, are available from~\url{https://github.com/sorooshafiee/Nonparam_Likelihood}.

\subsection{Synthetic Dataset: Beta-Binomial Inference}
\label{sect:exp:synthetic}

\begin{figure}[tb]
	\centering	
	\begin{minipage}[b]{0.64\textwidth} 
		\subfigure[KL divergence]{\label{fig:algorithm:a} \includegraphics[width=0.45\columnwidth]{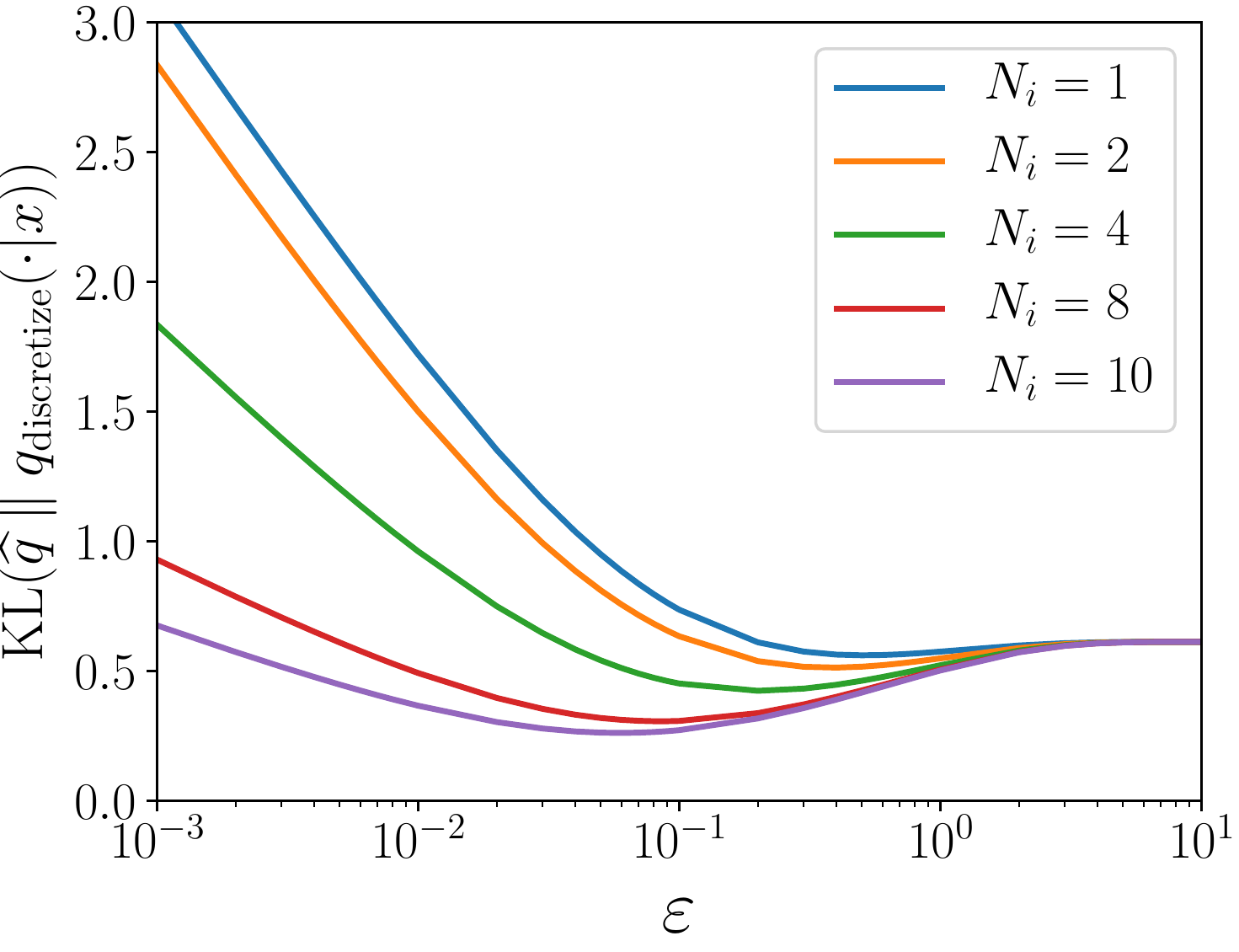}} \hspace{0pt}
		\subfigure[Wasserstein distance]{\label{fig:algorithm:b} \includegraphics[width=0.45\columnwidth]{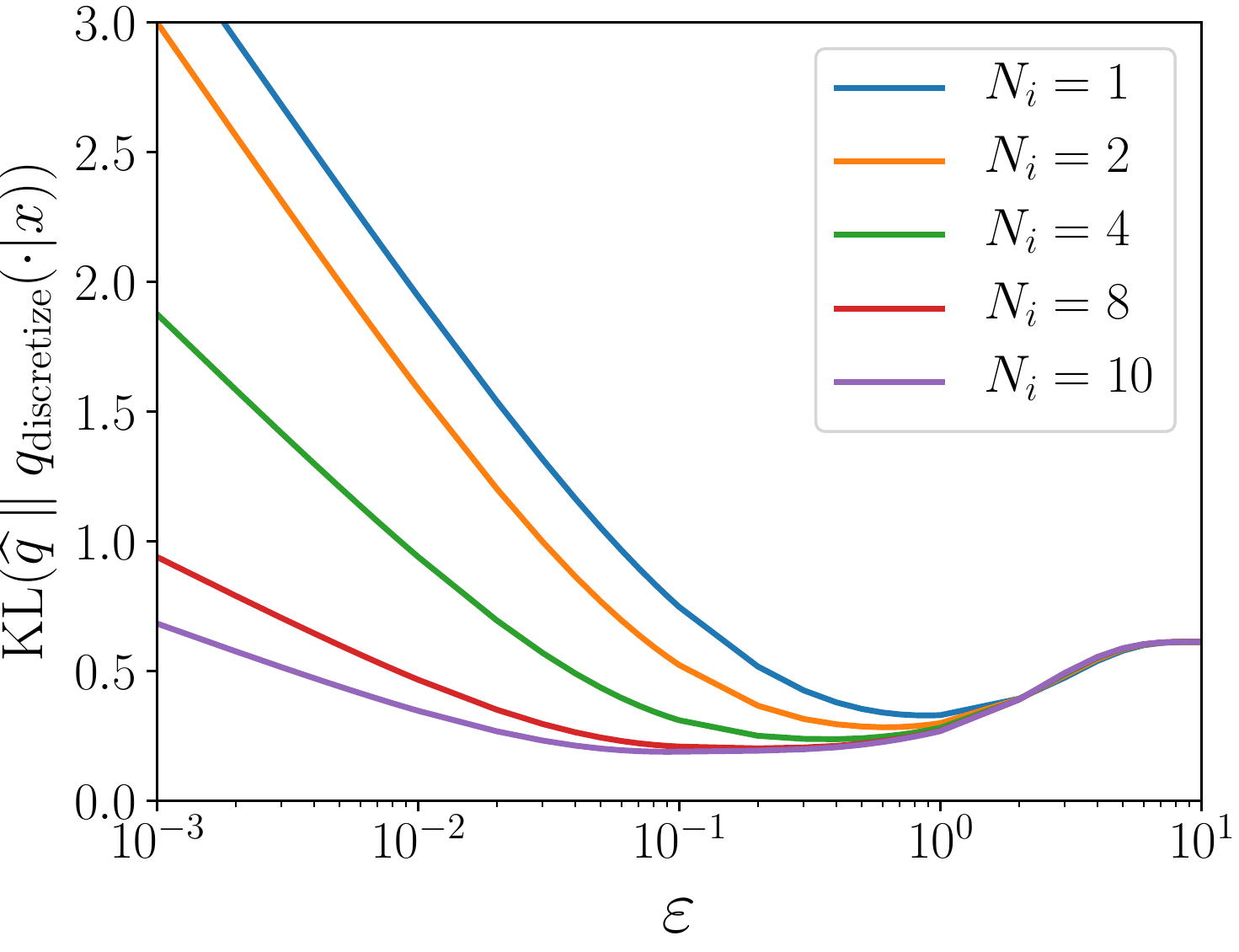}} \hspace{0pt}
		\captionsetup{skip=-2pt}
		\caption{Average KL divergence between $\wh q$ that solves~\eqref{eq:best:full} and the discretized posterior $q_{\text{discretize}}(\cdot| x)$ as a function of $\varepsilon$ and $N_i$.}
		\label{fig:algorithm}
	\end{minipage}
	\hspace{\stretch{4}}%
	\begin{minipage}[b]{0.34\textwidth} 
		\includegraphics[width=\columnwidth]{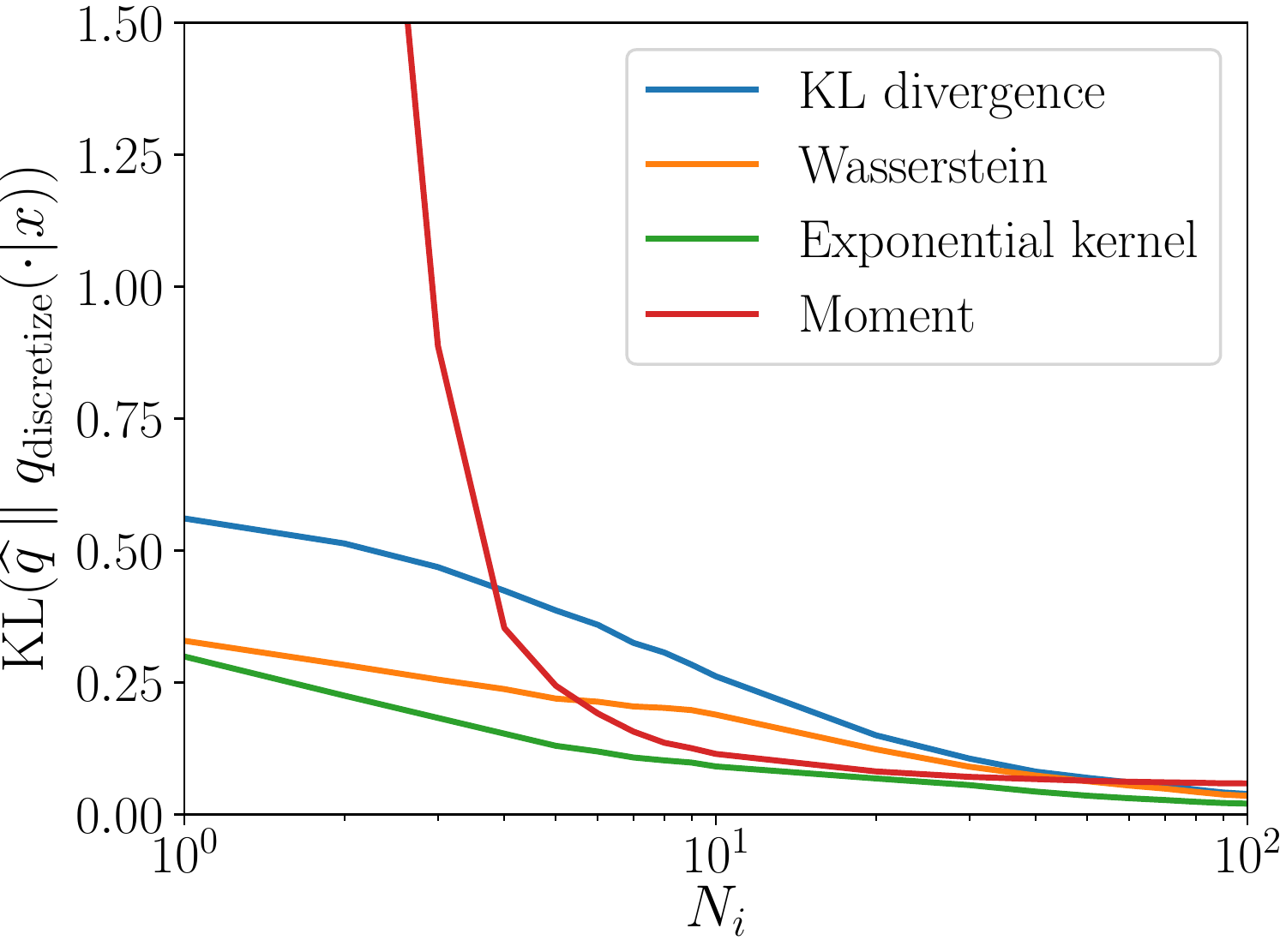}
		\captionsetup{skip=0pt}
		\caption{Optimally tuned performance of different approximation schemes with varying $N_i$.}
		\label{fig:algorithm:c}
		\label{fig:overall}
	\end{minipage}%
\end{figure}

We consider the beta-binomial problem in which the prior $\pi$, the likelihood $p(x|\theta)$, and the posterior distribution $q(\theta | x)$ have the following forms:
\begin{align*}
	\pi(\theta) = \text{Beta}(\theta | \alpha, \beta), \quad p(x | \theta) = \text{Bin}(x | M, \theta), \quad q(\theta | x) = \text{Beta}(\theta| x + \alpha, M - x + \beta)
\end{align*}
We emphasize that in this setting, the posterior distribution is known in closed form, and the main goal is to study the properties of the optimistic ELBO problem~\eqref{eq:best:full} and the convergence of the solution of problem~\eqref{eq:best:full} to the true posterior distribution. We impose a uniform prior distribution $\pi$ by setting $\alpha = \beta = 1$. The finite parameter space $\Theta = \{ \theta_1, \ldots, \theta_C\}$ contains $C= 20$ equidistant discrete points in the range $(0, 1)$. For simplicity, we set $N_1 = \ldots = N_C$ in this experiment.

We conduct the following experiment for different training set sizes $N_i \in \{1, 2, 4, 8, 10\}$ and different ambiguity set radii $\eps$. For each parameter setting, our experiment consists of 100 repetitions. In each repetition, we randomly generate an observation $x$ from a binomial distribution with $M = 20$ trials and success probability $\theta_{\text{true}} = 0.6$. We then find the distribution $\wh q$ that solves problem~\eqref{eq:best:full} using both the KL and the Wasserstein approximation. In a similar way, we find $\wh q$ by solving~\eqref{eq:best:full2}, where $p(x|\theta)$ is approximated using the exponential kernel of the likelihood~\eqref{eq:p:approx} with varying kernel width.

We evaluate the quality of the computed posteriors $\wh q$ from the different approximations based on the KL divergences of  $\wh q$ to the true discretized posterior $q_{\text{discretize}}(\theta_i| x) \propto \text{Beta}(\theta_i| x + \alpha, M - x + \beta)$. Figures~\ref{fig:algorithm:a} and~\ref{fig:algorithm:b} depict the average quality of $\wh q$ with different radii. One can readily see that the optimal size of the ambiguity set that minimizes $\KL(\wh q \parallel q_{\text{discretize}}(\cdot | x))$ decreases as $N_i$ increases for both the KL and the Wasserstein approximation. Figure~\ref{fig:algorithm:c} depicts the performance of the optimally tuned approximations with different sample sizes $N_i$. We notice that the optimistic likelihood over the Wasserstein ambiguity set is comparable to the exponential kernel approximation. 
\subsection{Real World Dataset: Classification}
\label{sect:exp:real}
We now consider a probabilistic classification setting with $C = 2$ classes. For each class $i = 1, 2$, we have access to $N_i$ observations denoted by $ \{\wh x_{ij}\}_{j \in[N_i]} $. The nominal class-conditional probability distributions are the empirical measures, that is,
$
\wh \nu_i =  N_i^{-1}\sum_{j \in [N_i]} \delta_{\wh x_{ij}}
$
for $i = 1, 2$. The prior distribution $\pi$ is also estimated from the training data as $\pi(\theta_i) = N_i/N$, where $ N = N_1 + N_2$ is the total number of training samples.
Upon observing a test sample $x$, the goal is to compute the posterior distribution $\wh q$ by solving the optimization problem~\eqref{eq:best:full} using different approximation schemes. We subsequently use the posterior $\wh q$ as a probabilistic classifier. In this experiment, we exclude the KL divergence approximation because $x \not \in \supp(\wh \nu_i)$ most of the time.

In our experiments involving the Wasserstein ambiguity set, we randomly select $75\%$ of the available data as training set and the remaining $25\%$ as test set. We then use the training samples to tune the radii $\eps_i \in \{ a \sqrt{m} 10^b: a \in \{ 1, \dots, 9 \}, b \in \{-3, -2, -1 \} \}$, $i = 1, 2$, of the Wasserstein balls by a stratified $5$-fold cross validation. For the moment based approximation, there is no hyper-parameter to tune, and all data is used as training set. We compare the performance of the classifiers from our optimistic likelihood approximation against the classifier selected by the exponential kernel approximation as a benchmark.

Table~\ref{table} presents the results on standard UCI benchmark datasets. All results are averages across $10$ independent trials. The table shows that our optimistic likelihood approaches often outperform the exponential kernel approximation in classification tasks.

\begin{table}[ht!]
	\centering	
	\caption{Average area under the precision-recall curve for various UCI benchmark datasets. Bold numbers correspond to the best performances.}
	\begin{tabular}{lrrr}
		\toprule
		 &    Exponential &  Moment &  Wasserstein \\
		\midrule
		Banknote Authentication &  99.05 &   99.99 &       \textbf{100.00} \\
		Blood Transfusion       &  64.91 &   \textbf{71.28} &        68.23 \\
		Breast Cancer           &  97.58 &   \textbf{99.26} &        97.99 \\
		Climate Model           &  \textbf{93.80} &   81.94 &        93.40 \\
		Cylinder                &  76.74 &   75.00 &        \textbf{86.23} \\
		Fourclass               &  99.95 &   82.77 &       \textbf{100.00} \\
		German Credit           &  67.58 &   \textbf{75.50} &        75.11 \\
		Haberman                &  70.82 &   70.20 &        \textbf{71.10} \\
		Heart                   &  78.77 &   \textbf{86.87} &        75.86 \\
		Housing                 &  75.62 &   81.89 &        \textbf{82.04} \\
		ILPD                    &  71.54 &   \textbf{72.95} &        69.88 \\
		Ionosphere              &  91.02 &   97.05 &        \textbf{98.79} \\
		Mammographic Mass       &  83.46 &   86.53 &        \textbf{87.86} \\
		Pima                    &  79.61 &   \textbf{82.37} &        80.48 \\
		QSAR                    &  84.44 &   \textbf{90.85} &        90.21 \\
		Seismic Bumps           &  74.81 &   \textbf{75.68} &        65.89 \\
		Sonar                   &  85.66 &   83.49 &        \textbf{93.85} \\
		Thoracic Surgery        &  54.84 &   \textbf{64.73} &        56.32 \\
		\bottomrule
	\end{tabular}\label{table}
\end{table}

\paragraph{\bf Acknowledgments}
We gratefully acknowledge financial support from the Swiss National Science Foundation under grant BSCGI0\_157733 as well as the EPSRC grants EP/M028240/1, EP/M027856/1 and EP/N020030/1.

\appendix
	\renewcommand\thesection{Appendix~\Alph{section}}
	
	
	\section{Proofs}
	\renewcommand\thesection{\Alph{section}}
	\setcounter{equation}{0}
	\renewcommand{\theequation}{A.\arabic{equation}}

	\subsection{Proofs of Section~\ref{sect:KL}}
	
	The proof of Proposition~\ref{prop:structure:KL} relies on the following auxiliary lemma, which we state first.
	
	\begin{lemma}[Upper semicontinuity] \label{lemma:upper}
		For any $x \in \X \subset \R^m$, the functional $F(\nu) = \nu(x)$ is upper semicontinuous over $\M(\X)$.
	\end{lemma}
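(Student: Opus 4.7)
The plan is to show that $F$ is upper semicontinuous with respect to the topology of weak convergence on $\M(\X)$ via a direct application of the Portmanteau theorem. No serious technical machinery beyond this is required.

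First, I would identify each $\nu \in \M(\X)$ with its associated probability measure on $\X$ (viewed as a measurable subset of $\R^m$), so that the point-mass evaluation $F(\nu) = \nu(x)$ is literally the measure of the singleton, i.e., $F(\nu) = \int_\X \mathbbm{1}_x(\xi)\, \nu(d\xi) = \nu(\{x\})$. This reframes the claim as: whenever $\nu_n \to \nu$ weakly in $\M(\X)$, one has $\limsup_{n \to \infty} \nu_n(\{x\}) \le \nu(\{x\})$.

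Second, I would invoke the Portmanteau theorem, which states that weak convergence $\nu_n \to \nu$ on the metric space $\X \subseteq \R^m$ is equivalent to the condition $\limsup_{n \to \infty} \nu_n(C) \le \nu(C)$ for every closed set $C \subseteq \X$. Because $\{x\}$ is closed in $\R^m$ (and hence in $\X$), applying this inequality with $C = \{x\}$ yields $\limsup_{n \to \infty} F(\nu_n) \le F(\nu)$ for every sequence $\nu_n \to \nu$ in $\M(\X)$, which is exactly the definition of upper semicontinuity of $F$.

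The only conceptual point worth flagging is the choice of topology on $\M(\X)$: the lemma is meaningful precisely with respect to the weak topology, which is also the topology under which the existence proof in Proposition~\ref{prop:structure:KL} can subsequently combine this lemma with a tightness argument on $\B_{\KL}(\wh\nu,\eps)$ to extract a convergent subsequence whose limit attains the supremum. Beyond fixing this convention, I do not anticipate any substantial obstacle.
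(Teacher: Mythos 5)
Your proof is correct and follows essentially the same route as the paper: the paper applies the Portmanteau theorem in its upper-semicontinuous-test-function form (using that $\mathbbm{1}_x$ is bounded and upper semicontinuous), whereas you apply the equivalent closed-set form with $C = \{x\}$; these are the same inequality $\limsup_k \nu_k(\{x\}) \le \nu(\{x\})$. No gap.
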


	\begin{proof}
		We denote by $\mathds{1}_{x}(\cdot)$ the indicator function at $x$, that is, $\mathds{1}_{x}(\xi) = 1$ if $\xi = x$ and $\mathds{1}_{x}(\xi) = 0$ otherwise. By definition, $F(\nu) = \int \mathds{1}_x \mathrm{d}\nu$. Moreover, let $\{\nu_k\}_{k \in \mbb N}$ be a sequence of probability measures converging weakly to $\nu \in \M(\X)$. Since $\mathds{1}_{x}(\cdot)$ is upper semicontinuous, the weak convergence of $\nu_k$ implies that
		\begin{equation*}
			\limsup_{k\rightarrow \infty} F (\nu_k) = \limsup_{k\rightarrow \infty}\int \mathds{1}_x \mathrm{d} \nu_k \le \int \mathds{1}_x \mathrm{d} \nu = F (\nu),
		\end{equation*}
		which in turn shows that the functional $F$ is upper semicontinuous.
	\end{proof}
	
	\begin{proof}[Proof of Proposition~\ref{prop:structure:KL}]
		If $\eps = 0$, the ball $\B_{\KL}(\wh \nu, \eps)$ contains a singleton $\wh\nu$ and the claim holds trivially. We can thus assume that $\eps > 0$. Since $\B_{\KL}(\wh \nu, \eps)$ is not necessarily weakly compact, the existence of the optimal measure $\nu\opt$ is not trivial. To show that $\nu\opt$ exists, we first establish that
		\be \label{eq:Df:equivalence}
		\Sup{\nu \in \B_{\KL}(\wh \nu, \eps)} \, \nu(x) = \Sup{\substack{\nu \in \B_{\KL}(\wh \nu, \eps) \\ \supp(\nu) \subseteq (\wh{\mc S} \cup \{ x \}) }} \, \nu(x),
		\ee	
		where $\wh{\mc S} = \supp (\wh \nu)$. To establish~\eqref{eq:Df:equivalence}, it suffices to show that for any $\bar \nu \in \B_{\KL}(\wh \nu, \eps)$ that assigns a non-zero probability on $ \X \backslash (\wh{\mc S} \cup \{ x \})$, there exists $\nu' \in \B_{\KL}(\wh \nu, \eps)$ satisfying $\supp(\nu') \subseteq \wh{\mc S} \cup \{ x \}$ such that $\nu'$ attains a higher objective value than $\bar \nu$, that is, $\nu'(x) > \bar \nu(x)$.	Because $\bar \nu$ assigns a non-zero probability to $\X \backslash (\wh{\mc S} \cup \{ x\})$, we have
		\[
		0 < \kappa \Let \sum_{z \in \X \backslash (\wh{\mc S} \cup \{ x\} )}  \bar \nu(z) \leq 1.
		\]
		We now construct the measure $\nu'$ explicitly. Assume that $x \not \in \wh{\mc S}$. In this case, consider the discrete measure $\nu'$ supported on $\wh{\mc S} \cup \{x\}$ given by 
\[
			\nu'(x) = \bar \nu(x) + \kappa \quad \text{and} \quad \nu'(\wh x_j) = \bar \nu(\wh x_j) \quad \forall j \in [N].
			\]
		Intuitively, $\nu'$ keeps the probability of $\bar \nu$ on $\wh {\mc S}$, and it gathers the probability everywhere else and puts that mass onto $x$.
		We first show that $\nu'$ is a probability measure. Indeed, since $\kappa > 0$ and $\bar \nu$ is a probability measure, we have $\nu' \geq 0$. Moreover, we find
		\[
		\sum_{z \in\mc X} \nu'(z) = \sum_{j\in[N]} \bar \nu(\wh x_j) + \bar \nu(x) + \kappa = \sum_{j \in[N]} \bar \nu(\wh x_j) + \bar \nu(x) + \ds \sum_{z \in \X \backslash (\wh{\mc S} \cup \{ x\} )} \bar \nu(z) = \sum_{z \in \X} \bar \nu(z) = 1,
		\]
		where the first equality exploits the definition of $\bar \nu$, and the second equality follows from the definition of $\kappa$. Thus we conclude that $\nu'$ is a probability measure. We now proceed to show that $\nu'$ satisfies the KL divergence constraint. Indeed, we have
		\begin{subequations}
			\begin{align}
				\KL (\wh \nu \parallel \nu') &= \sum_{z \in \X} f \left( \frac{\wh \nu(z)}{ \nu'(z)} \right) \nu'(z) \notag \\
				&= \sum_{j \in [N]} f \left( \frac{ \wh \nu_j }{ \nu'(\wh x_j) } \right) \nu'(\wh x_j) + \nu'(x) \label{eq:fdiv:1} \\
				&= \sum_{j \in [N]} f \left( \frac{ \wh \nu_j }{ \bar \nu(\wh x_j) } \right) \bar \nu(\wh x_j) + \bar \nu(x) + \kappa \label{eq:fdiv:2} \\
				&= \sum_{j \in [N]} f \left( \frac{ \wh \nu_j }{ \bar \nu(\wh x_j) } \right) \bar \nu(\wh x_j) + \bar \nu(x) + \sum_{z \in \X \backslash (\wh{\mc S} \cup \{ x\} )} f\left( \frac{ \wh \nu(z) }{ \bar \nu(z) } \right) \bar \nu(z) \label{eq:fdiv:3} \\
				&= \sum_{z \in \X} f \left( \frac{\wh \nu(z)}{\bar \nu(z)} \right) \bar \nu(z) \leq \eps. \label{eq:fdiv:4}
			\end{align}
		\end{subequations}
		Equality~\eqref{eq:fdiv:1} holds because $f(0) = 1$ for the function $f$ defined in Definition~\ref{def:KL} and $\supp(\nu') \subseteq \wh{\mc S} \cup \{ x \}$. Equality~\eqref{eq:fdiv:2} follows from the construction of $\nu'$, and equality~\eqref{eq:fdiv:3} holds due to the definition of $\kappa$ and the fact that $f(0) = 1$. Finally, the inequality in~\eqref{eq:fdiv:4} follows from the feasibility of $\bar \nu$, and it implies that $\nu' \in \B_{\KL}(\wh \nu, \eps)$.
		Furthermore, because $\kappa >0$, we have $\nu'(x) = \bar \nu(x) + \kappa > \bar \nu(x)$ which asserts that $\bar \nu$ is strongly dominated by $\nu'$, and thus $\bar \nu$ cannot be an optimal measure.
		
		Consider now the case $x \in \wh{\mc S}$. Without loss of generality, we assume that $x = \wh x_N$. In this case, it suffices to consider $\bar \nu$ satisfying $\bar \nu(\wh x_N) \ge \wh \nu_N$ because any $\bar \nu$ with $\bar \nu(\wh x_N) < \wh \nu_N$ is already dominated by the nominal measure $\wh \nu$. Since $\kappa > 0$ and $\bar \nu(\wh x_N) \ge \wh \nu_N$, there must exist $K \in [N-1]$ atoms denoted without loss of generality by $\{ \wh x_1, \ldots, \wh x_K\}$ that satisfy $\bar \nu (\wh x_j) < \wh \nu_j$ for all $k \in [K]$. Due to the continuity of the function $f$, there exists $\bar \epsilon \in (0, \kappa)$ that satisfies
		\[
		f \left( \frac{\wh \nu_N}{\bar \nu(\wh x_N) + \bar \epsilon} \right) (\bar \nu(\wh x_N) + \bar \epsilon) \leq f \left( \frac{\wh \nu_N}{\bar \nu(\wh x_N)} \right) \bar \nu(\wh x_N) + \kappa.
		\]
		We now consider the following measure $\nu'$ supported on $\wh{\mc S}$:
		\begin{align*} 
			\nu'(\wh x_j) = \left\{ \begin{array}{l@{}ll}
			& \bar \nu (\wh x_j) + (\kappa - \bar \epsilon)\times(\wh \nu_j - \bar \nu(\wh x_j))/ \sum_{k \in [K]} (\wh \nu_k - \bar \nu(\wh x_k)) & \forall j \in [K], \\
			& \bar \nu (\wh x_j) & \forall j \in ([N-1] \backslash [K]), \\
			& \bar \nu (\wh x_N) + \bar \epsilon & j = N.
			\end{array} \right.
		\end{align*} 
		We can verify that $\nu'$ is a probability measure supported on $\wh{\mc S}$ and that $\nu'(\wh x_N) > \bar \nu(\wh x_N)$. Furthermore, we have
		\begin{align*}
			\KL (\wh \nu \parallel \nu') &= \sum_{j \in [N]}  f \left( \frac{ \wh \nu_j }{ \nu'(\wh x_j) } \right) \nu'(\wh x_j) \\
			&= \sum_{j \in [K]}  f \left( \frac{ \wh \nu_j }{ \nu'(\wh x_j) } \right) \nu'(\wh x_j) + \sum_{j \in ([N-1]\backslash[K])}  f \left( \frac{ \wh \nu_j }{ \nu'(\wh x_j) } \right) \nu'(\wh x_j) +  f \left( \frac{ \wh \nu_N }{ \nu'(\wh x_N) } \right) \nu'(\wh x_N) \\
			&\le \sum_{j \in [N]}  f \left( \frac{ \wh \nu_j }{ \bar \nu(\wh x_j) } \right) \bar \nu(\wh x_j) + \kappa = \KL (\wh \nu \parallel \bar \nu) \leq \eps,
		\end{align*}
		where the first inequality follows from the definition of $\nu'$, the definition of $\bar \epsilon$, the fact that for any $\wh \nu_j > 0$ the function $t \mapsto t f(\wh \nu_j / t)$ is non-increasing in $t$ over the domain $(0, \wh \nu_j)$ and that $0\le \bar \nu(\wh x_j) < \nu'(\wh x_j) \leq \wh \nu_j$ by construction. We have thus asserted that $\bar \nu$ is dominated by $\nu' \in \B_{\KL}(\wh \nu, \eps)$, and we conclude that~\eqref{eq:Df:equivalence} holds.
		
		We now consider the supremum on the right hand side of~\eqref{eq:Df:equivalence}. By Lemma~\ref{lemma:upper}, the objective function of~\eqref{eq:Df:equivalence} is upper semicontinuous.  Furthermore, the feasible set
		\[
		\left\{ \nu \in \M(\X): \supp(\nu) \subseteq (\wh{\mc S} \cup \{x\}), \, \KL( \wh \nu \parallel \nu) \leq \eps \right\}
		\]
		is weakly compact because it only contains measures supported on a finite set~\cite[Theorem~15.11]{ref:aliprantis06hitchhiker}.  By the Weierstrass maximum value theorem \cite[Theorem~2.43]{ref:aliprantis06hitchhiker}, the supremum in~\eqref{eq:Df:equivalence} is attained and there exists $\nu\opt_{\KL} \in \B_{\KL}(\wh \nu, \eps)$ such that
		\begin{equation*}
			\Sup{\nu \in \B_{\KL}(\wh \nu, \eps)} \, \nu(x) =  \nu\opt_{\KL} (x).
		\end{equation*}
		This observation completes the proof.
	\end{proof}
	
	\begin{proof}[Proof of Theorem~\ref{thm:KL-divergence}]
		Consider first the case when $x \in \wh{\mc S}$, where $\wh{\mc S} = \supp (\wh \nu)$. As a result of Proposition~\ref{prop:structure:KL}, the distribution that maximizes the probability at point $x$ subject to the KL divergence constraint will be supported on at most $N$ points from the set $\wh{\mc S}$. The probability measures of interest thus share the form 
		\[
		\nu = \sum_{j \in [N]} y_j \delta_{\wh x_{j}} 
		\]
		for some $y \in \R^N_+, \, \sum_{j \in [N]} y_j = 1$. The optimistic likelikood~\eqref{eq:prob:KL} satisfies
		\be \label{eq:KL:sup}
		\nu_{\KL}\opt(x) = \sup\left\{ \sum_{j \in [N]} y_j \mathbbm{1}_x(\wh x_j)  : y \in \R^N_{++}, \, \sum_{j \in [N]} \wh \nu_j \log \left( \frac{\wh \nu_j}{y_j}\right) \leq \eps, \, \sum_{j \in [N]} y_j = 1 \right\},
		\ee
		which is a finite dimensional convex program in $y$. 

		Next, we consider the case where $x \not \in \wh{\mc S}$. To this end, for any $N \in \mbb N_+$, we denote by $\Delta_N$ the simplex
		\be \label{eq:simplex}
		\Delta_N \Let \left\{ y \in \R_+^N: 0 \leq y_{j} \leq 1 \, \forall j \in[N],\, \sum_{j \in [N]} y_j \leq 1 \right\}.
		\ee The relevant measures in $\B_{\KL}(\wh \nu, \eps)$ then share the form
		\[
		\nu = \sum_{j \in [N]} y_{j} \delta_{\wh x_{j}}  + (1-  \sum_{j \in [N]} y_{j}) \delta_{x}
		\]
		for some $y \in \Delta_N$. In this case, the optimistic likelihood~\ref{eq:prob:KL} evaluates to
		\[
		\nu_{\KL}\opt(x) = \Max{\substack{y \in \Delta_N \\ y > 0}} \left\{  1-\sum_{j \in [N]} y_{j} : \sum_{j \in [N]} y_{j} f\left( \frac{\wh \nu_j}{y_j} \right) - \left(1-\sum_{j \in [N]}  y_{j}\right) f(0) \leq \eps  \right\}.
		\]
		Since $f$ is convex, the above program is a finite convex program in $y$. We now show that the above optimization problem admits an analytical solution. Consider the equivalent minimization problem
		\begin{align} \label{eq:opt:KL}
			\text{OPT}_{\KL}\opt \Let \Min{\substack{y \in \Delta_N \\ y > 0}} \left\{  \sum_{j \in [N]} y_j : \sum_{j \in [N]} \wh \nu_j \log \wh \nu_j - \sum_{j \in [N]} \wh \nu_j \log y_j \leq \eps \right\}.
		\end{align}
		Suppose that $\eps > 0$. By a standard duality argument, the above program is equivalent to
		\begin{subequations}
			\begin{align}
				\text{OPT}_{\KL}\opt =&\Inf{\substack{y \in \Delta_N \\ y > 0}} \Sup{\dualvar \ge 0} \left\{  \sum_{j \in [N]} y_j + \dualvar \left(\sum_{j \in [N]} \wh \nu_j \log \wh \nu_j - \eps - \sum_{j \in [N]} \wh \nu_j \log y_j \right) \right\} \label{eq:KL:1}\\
				=&\Sup{\dualvar \ge 0} \left\{ \dualvar \left(\sum_{j \in [N]} \wh \nu_j \log \wh \nu_j - \eps \right) + \Inf{\substack{y \in \Delta_N \\ y > 0}} \left\{  \sum_{j \in [N]} y_j   - \dualvar \sum_{j \in [N]} \wh \nu_j \log y_j  \right\} \right\} \label{eq:KL:2}\\
				\ge&\sup_{1 \ge \dualvar > 0} \left\{\dualvar \left(\sum_{j \in [N]} \wh \nu_j \log \wh \nu_j - \eps \right) + \Inf{\substack{y \in \Delta_N \\ y > 0}} \left\{  \sum_{j \in [N]} y_j   - \dualvar \sum_{j \in [N]} \wh \nu_j \log y_j  \right\} \right\} \label{eq:KL:3}\\
				=&\Sup{1 \ge \dualvar > 0} \left\{ \dualvar \left( \sum_{j \in [N]} \wh \nu_j - \eps \right) - \sum_{j \in [N]} \wh \nu_j  \dualvar \log \dualvar \right\}, \label{eq:KL:4}
			\end{align}
		\end{subequations}
		where the equality~\eqref{eq:KL:2} follows from strong duality since the Slater condition for the primal problem is satisfied. 
		The inequality~\eqref{eq:KL:3} follows directly from the restriction of the feasible set of $\dualvar$ and because the objective function is continuous in $\dualvar$. For any $\dualvar \in (0, 1]$, the inner minimization admits the optimal solution $y_j\opt = \dualvar \wh \nu_j$, and this leads to the last equation~\eqref{eq:KL:4}. The maximization over $\dualvar$ is now a convex optimization problem, and the first-order condition gives the optimal solution $\dualvar\opt = \exp\left(-\eps \right)$.
		We can thus conclude that
		\[
		\text{OPT}_{\KL}\opt \geq \exp\left(-\eps\right).
		\]
		By substituting the feasible solution 
		\[
		y_j = \exp\left( -\eps \right) \wh \nu_j \quad \forall j \in [N]
		\]
		into~\eqref{eq:KL:1}, we see that $\text{OPT}_{\KL}\opt \leq \exp\left(-\eps\right)$. Hence,
		\[\text{OPT}_{\KL}\opt = \exp\left(-\eps \right)  \quad \forall \eps > 0.\]
		Consider now the optimal value $\text{OPT}_{\KL}\opt$ defined in~\eqref{eq:opt:KL} as a parametric function of the radius $\eps$ over the domain $\R_+$. One can show that $\text{OPT}_{\KL}\opt$ is a continuous function over $\eps \in \R_+$ using Berge's maximum theorem~\cite[Theorem~17.31]{ref:aliprantis06hitchhiker}. Furthermore, the function $\exp(-\eps)$ is also continuous over $\eps \in \R_+$. We thus conclude that
		\[\text{OPT}_{\KL}\opt = \exp\left(-\eps \right)  \quad \forall \eps \ge 0.\]
		The proof for this case is completed by noticing that $\nu_{\KL}\opt(x) = 1 - \text{OPT}_{\KL}\opt$.		
	\end{proof}

	\subsection{Proofs of Section~\ref{sect:Wass}}

	\begin{proof}[Proof of Proposition~\ref{prop:structure:Wass}]
		When $\eps = 0$, $\B_\Wass(\wh \nu, \eps)$ is the singleton set $\{ \wh\nu \}$ and the claim is trivial. It thus suffices to consider $\eps > 0$. Since $\B_\Wass(\wh \nu, \eps)$ is weakly compact~\cite[Proposition~3]{ref:pichler2018quantitative} and the objective function in~\eqref{eq:prob:Wass} is upper-semicontinuous in $\nu$ by Lemma~\ref{lemma:upper}, a version of the Weierstrass maximum value theorem \cite[Theorem~2.43]{ref:aliprantis06hitchhiker} implies that there exists $\nu\opt \in \B_\Wass(\wh \nu, \eps)$ such that
		\begin{equation*}
			\Sup{\nu \in \B_\Wass(\wh \nu, \eps)} \, \nu(x) =  \nu\opt_{\Wass} (x).
		\end{equation*}
		Suppose that $\bar \nu$ is an optimal measure that solves~\eqref{eq:prob:Wass}, that is, $\bar \nu \in \B_\Wass(\wh \nu, \eps)$ and $\bar \nu(x) = \nu\opt_{\Wass}(x)$. Since the ground metric distance $d(\cdot,\cdot)$ in the Wasserstein distance is continuous, there exists an optimal transport plan~$\bar \lambda$ that maps $\wh \nu$ to $\bar \nu$ \cite[Theorem~4.1]{ref:villani2008optimal}. Since $\wh \nu$ is a discrete distribution with $N$ atoms, this optimal transport map can be characterized by $N$ functions 	$\bar \lambda_j: \X \to \R_+$, $j \in [N]$, which satisfy the balancing constraints
		\[
		\sum_{z \in \mc X} \bar \lambda_j (z) = \wh \nu_j \; \forall j \in [N] \quad \text{and} \quad \sum_{j=1}^N \bar \lambda_j(z) = \bar \nu(z) \; \forall z \in \X
		\]
		as well as the Wasserstein distance constraint 
		\be \label{eq:Wass:barlambda}
		\sum_{j\in [N]} \sum_{z \in \X}  d(\wh x_j , z) \bar \lambda_j(z)  \leq \eps.
		\ee
		Define $\kappa_j$ and $\eta_j$ as
		\[
		\kappa_j \Let \sum_{z \in \mc X \backslash (\wh{\mc S} \cup \{x\})} \bar \lambda_j(z) \quad \text{and} \quad \eta_j \Let \sum_{z \in \mc X \backslash (\wh{\mc S} \cup \{x\})} d(\wh x_j , z )\bar \lambda_j(z) \qquad \forall j \in [N].
		\]
		By construction, we have $0 \le \kappa_j \le \wh \nu_j \le 1$ and $0 \leq \eta_j$ for all $j \in [N]$. Suppose that $\bar \nu$ assigns non-zero probability mass on $\mc X \backslash (\wh{\mc S} \cup \{x\})$, where $\wh{\mc S} = \supp (\wh \nu)$. In that case, there exists $j \in [N]$ such that $\kappa_j > 0$ and $\eta_j > 0$. We will next show that $\bar \nu$ cannot be the optimal solution.
		
		Assume first that $x \not \in \wh{\mc S}$, and define the transport maps $\lambda'_j: \X \to \R_+$ for $j \in [N]$ as 
		\begin{align*}
			\lambda'_j(z) = \left\{
			\begin{array}{ll}
				\bar \lambda_j(\wh x_j) + \left( 1-\min \left\{ 1, \frac{\eta_j}{d(x , \wh x_j )} \right\} \right) \kappa_j & \text{if } z = \wh x_j,\\ 
				\bar \lambda_{j}(\wh x_{k}) & \text{if } z = \wh x_{k}, k \neq j, k \in [N], \\
				\bar \lambda_j(x) + \min \left\{ 1, \frac{\eta_j}{d( x , \wh x_j )} \right\} \kappa_j & \text{if } z = x, \\
				0 & \text{otherwise.}
			\end{array}
			\right.
		\end{align*}
		By this construction of $\lambda'_j$, we obtain
		\[
		\sum_{z \in \mc X} \lambda'_j (z)  = \sum_{z \in \mc X} \bar \lambda_j (z) = \wh \nu_j \qquad \forall j \in [N].
		\]
		We now construct a measure $\nu'$ explicitly using the transport map $\lambda'$ from $\wh \nu$ as
		\be \label{eq:nuprime:def}
		\nu'(z) = \sum_{j \in [N]} \lambda'_j(z) \qquad \forall z \in \X.
		\ee	
		Notice that $\nu'$ is supported on $\wh{\mc S} \cup \{x\}$, $\nu' \geq 0$ and 
		\[
		\sum_{z \in \X} \nu'(z) = \sum_{j \in [N]} \left( \sum_{k \in [N]} \bar \lambda_j(\wh x_{k}) + \kappa_j + \bar \lambda_j(x) \right) =  \sum_{j \in [N]} \sum_{z \in \mc X} \bar \lambda_j (z) = \sum_{j \in [N]} \wh \nu_j = 1,
		\]
		which further implies that $\nu'$ is a probability measure on $\X$. Moreover, we have
		\begin{subequations}
			\begin{align}
				\Wass(\wh \nu, \nu') &\leq \sum_{j \in [N]} \sum_{k \in [N]} d( \wh x_j , \wh x_{k} ) \, \lambda_j'(\wh x_{k}) + \sum_{j \in [N]} d( \wh x_j , x ) \lambda_j'(x) \label{eq:OT:1} \\
				&=\sum_{j \in [N]} \left( \sum_{k \in [N]} d( \wh x_j , \wh x_{k} ) \bar \lambda_j (\wh x_{k}) + d( \wh x_j , x ) \bar \lambda_j(x) + \min \left\{ d( \wh x_j , x ) \kappa_j , \eta_j \kappa_j \right\} \right) \notag\\
				&\leq \sum_{j \in [N]} \left( \sum_{k \in [N]} d( \wh x_j , \wh x_{k} ) \bar \lambda_j (\wh x_{k}) + d( \wh x_j , x) \bar \lambda_j(x) + \eta_j \kappa_j \right) \notag \\
				&\leq \sum_{j \in [N]} \left( \sum_{k \in [N]} d( \wh x_j , \wh x_{k} ) \bar \lambda_j (\wh x_{k}) + d( \wh x_j , x ) \bar \lambda_j(x) + \eta_j \right) \label{eq:OT:2}\\
				&= \sum_{j \in [N]} \left( \sum_{k \in [N]} d( \wh x_j , \wh x_{k} ) \bar \lambda_j (\wh x_{k}) + d( \wh x_j , x ) \bar \lambda_j(x) + \sum_{z \in \mc X \backslash (\wh{\mc S} \cup \{x\})} d( \wh x_j , z ) \bar \lambda_j(z)  \right) \notag \\
				&= \sum_{j\in [N]} \sum_{z \in \X} d( \wh x_j , z ) \bar \lambda_j(z) \leq \eps. \label{eq:OT:3}
			\end{align}
		\end{subequations}
		Inequality~\eqref{eq:OT:1} holds because of the definition of the Wasserstein distance and the fact that $\{\lambda'_j\}_{j \in [N]}$ constitutes a feasible transportation plan from $\wh \nu$ to $\nu'$. Inequality~\eqref{eq:OT:2} holds due to the non-negativity of both $\eta_j$ and $\kappa_j$ and the fact that $\kappa_j \leq 1$. Inequality~\eqref{eq:OT:3} is a consequence of~\eqref{eq:Wass:barlambda}.
		The last inequality implies that $\nu' \in \B_\Wass(\wh \nu, \eps)$, and thus $\nu'$ is a feasible measure for the optimistic likelihood problem. Finally, we have
		\[
		\nu'(x) = \sum_{j \in [N]} \lambda'_j(x) = \sum_{j \in [N]} \left( \bar \lambda_j(x) + \min \left\{ 1, \frac{\eta_j}{d( x , \wh x_j )} \right\} \kappa_j \right) > \sum_{j \in [N]} \bar \lambda_j(x) = \bar \nu(x),
		\]
		where the strict inequality is from the fact that there exists $j \in [N]$ such that $\kappa_j > 0$ and $\eta_j > 0$. Thus, $\nu' \in \B_\Wass(\wh \nu, \eps)$ attains a higher objective value than $\bar \nu$, and as a consequence $\bar \nu$ cannot be an optimal measure. Notice that $\supp(\nu') \subseteq ( \wh{\mc S} \cup \{x\})$ by construction, and thus we conclude that when $x \not \in \wh{\mc S}$, the optimal measure $\nu\opt_{\Wass}$ satisfies $\supp(\nu\opt_{\Wass}) \subseteq ( \wh{\mc S} \cup \{x\})$.
		
		Assume now that $x \in \wh{\mc S}$, and assume without loss of generality that $x = \wh x_N$. Consider now the transport plan $\lambda'_j: \X \to \R_+$ for any $j \in [N]$ defined as
		\begin{align*}
			\forall j \in [N-1]: \; \lambda'_j(z) = \left\{
			\begin{array}{ll}
				\bar \lambda_j(\wh x_j) + \left( 1-\min \left\{ 1, \frac{\eta_j}{d( x , \wh x_j )} \right\} \right) \kappa_j & \text{if } z = \wh x_j,\\ 
				\bar \lambda_{j}(\wh x_{k}) & \text{if } z = \wh x_{k}, k \neq j, k \in [N-1],\\
				\bar \lambda_j(x) + \min \left\{ 1, \frac{\eta_j}{d( x , \wh x_j )} \right\} \kappa_j & \text{if } z = \wh x_N, \\
				0 & \text{otherwise}
			\end{array}
			\right.
		\end{align*}
		and
		\begin{align*}
			\lambda'_N(z) = \left\{
			\begin{array}{ll}
				\bar \lambda_{N}(\wh x_{k}) & \text{if } z = \wh x_{k}, k \in [N-1],\\
				\bar \lambda_N(\wh x_N') + \kappa_N & \text{if } z = \wh x_N, \\
				0 & \text{otherwise.}
			\end{array}
			\right.
		\end{align*}
		One can readily verify that using the collection $\{\lambda'_j\}_{j \in [N]}$ to define $\nu'$ in~\eqref{eq:nuprime:def} results in a probability measure $\nu' \in \B_\Wass(\wh \nu, \eps)$ that attains a strictly higher objective value than $\bar \nu$. Notice that this construction satisfies $\supp(\nu') \subseteq \wh{\mc S}$, and hence we can conclude that when $x \in \wh{\mc S}$, the optimal measure $\nu\opt_{\Wass}$ satisfies $\supp(\nu\opt_{\Wass}) \subseteq  \wh{\mc S}$. This completes the proof.
	\end{proof}
	
	\begin{proof}[Proof of Theorem~\ref{thm:Wass}]
		As a result of Proposition~\ref{prop:structure:Wass}, we can restrict ourselves to probability measures that are supported on $\supp (\wh{\nu}) \cup \{x\}$. Thus, it suffices to optimize over the set of discrete probability measures of the form
		\[
		\nu = \sum_{j \in [N]} y_j \delta_{\wh x_{j}} + \left(1-  \sum_{j \in [N]} y_j \right) \delta_{x}
		\]
		for some $y \in \Delta_N$, where $\Delta_N$ is the simplex defined in~\eqref{eq:simplex}. Using the Definition~\ref{def:wasserstein} of the type-1 Wasserstein distance, we can rewrite the optimistic likelihood problem over the Wasserstein ball $\B_\Wass(\wh \nu, \eps)$ as the linear program
		\[
		\Sup{\nu \in \B_\Wass(\wh \nu, \eps)} \nu(x) =
		\left\{
		\begin{array}{cll}
		\sup & 1- \ds \sum_{j \in [N]} y_j \\
		\st & y \in \Delta_N, \; \lambda \in \R_+^{N \times (N+1)} \\
		& \ds \sum_{j \in [N]} \sum_{j' \in [N]} d( \wh x_{j} , \wh x_{j'} ) \lambda_{jj'} +\sum_{j \in [N]} d( \wh x_{j} , x ) \lambda_{j(N+1)}  \leq \eps \\
		& \ds \sum_{j' \in [N+1]} \lambda_{jj'} = \wh \nu_j & \forall j \in [N]\\
		& \ds \sum_{j \in [N]} \lambda_{jj'} = y_j  & \forall j' \in [N] \\
		& \ds \sum_{j \in [N]} \lambda_{j(N + 1)} = 1 - \sum_{j \in [N]} y_j.
		\end{array}
		\right.
		\]
		From the last constraint, we can see that maximizing $1 - \sum_{j \in [N]} y_j$ is equivalent to maximizing $\sum_{j \in [N]} \lambda_{j(N + 1)}$. In particular, we thus conclude that it is optimal to set $\lambda_{jj'} = 0$ for any $j \in [N], j' \in [N]$ such that $j \neq j'$. We thus have
		\be \notag
		\Sup{\nu \in \B_\Wass(\wh \nu, \eps)} \nu(x) =\left\{
		\begin{array}{cll}
			\sup &  \ds \sum_{j \in [N]} \lambda_{j(N + 1)} \\
			\st & y \in \Delta_N, \; \lambda \in \R_+^{N \times (N+1)} \\
			& \lambda_{jj'} = 0 \quad \forall j \in [N], j' \in [N], j \neq j'\\
			& \ds\sum_{j \in [N]} d( \wh x_{j} , x ) \, \lambda_{j(N+1)}  \leq \eps \\
			& \ds \lambda_{jj} + \lambda_{j(N+1)} = \wh \nu_j, \quad \ds \lambda_{jj} = y_j & \forall j \in [N].
		\end{array}
		\right.
		\ee
		By letting $T_j = \lambda_{j(N +1)}$ and eliminating the redundant components of $\lambda$, we obtain the desired reformulation. This completes the proof.
	\end{proof}

	\begin{proof}[Proof of Proposition~\ref{lemma:greedy}]
		By a change of variables, we define the weight $\wh w_j = d( \wh x_{j} , x ) \wh \nu_j$ and the decision variables $z_j = \wh \nu_j^{-1} T_j$ for every $j \in [N]$. The optimal value of problem~\eqref{eq:Wass} then coincides with the optimal value of
		\be \label{eq:knapsack}
		\max \left\{ \ds \sum_{j \in [N]} \wh \nu_j z_j: \, z \in \R_+^N, \, \ds \sum_{j \in [N]} \wh w_j z_j \leq \eps, \, z_j \leq 1 \; \forall j \in [N]\right\},
		\ee
		which is a continuous (or fractional) knapsack problem. The special structure of~\eqref{eq:knapsack} guarantees
		\[
		\frac{\wh \nu_j}{\wh w_j} = \frac{1}{d( \wh x_j , x )} \quad \forall j \in [N],
		\] 
		and hence the continuous knapsack problem~\eqref{eq:knapsack} admits an optimal solution $z\opt$ that can be found by sorting the support points $\wh x_j$ in increasing order of distance from $x$ and then exhausting the budget $\eps$ according to the sorted order (see~\cite{ref:dantzig1957discrete} or \cite[Proposition~17.1]{ref:korte2007}). Since sorting an array of $N$ scalars can be achieved in time $\mc O(N \log N)$, problem~\eqref{eq:knapsack} can be solved efficiently, and the optimal solution $T\opt$ of~\eqref{eq:Wass} can be constructed from the optimal solution $z\opt$ of~\eqref{eq:knapsack} by setting
		\[
		T\opt_j = \wh \nu_j z_j\opt \quad \forall j \in [N].	
		\] 
		This completes the proof.
	\end{proof}
	
	\begin{corollary}[Comparative statics] \label{corol:Wass:comparative}
		If the radius $\eps$ of the Wasserstein ball is strictly positive, then $\nu\opt_\Wass(x) > 0$. Moreover, if the radius satisfies $\eps \ge \sum_{j \in [N]} d(x, \wh x_j) \wh \nu_{j}$, then $\nu\opt_\Wass(x) = 1$. 
	\end{corollary}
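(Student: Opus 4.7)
The plan is to leverage the linear programming reformulation in Theorem~\ref{thm:Wass}, which expresses $\nu\opt_\Wass(x)$ as the optimal value of a maximization over $T \in \R_+^N$ with the two constraints $\sum_j d(x,\wh x_j) T_j \le \eps$ and $T_j \le \wh \nu_j$ for all $j \in [N]$. Both claims will follow simply by exhibiting feasible choices of $T$ that attain the desired objective values.

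For the first claim, assuming $\eps > 0$, I would pick any index $j_0 \in [N]$ (for instance, one that minimizes $d(x, \wh x_{j_0})$) and set $T_j = 0$ for $j \neq j_0$. If $d(x, \wh x_{j_0}) = 0$, I would set $T_{j_0} = \wh \nu_{j_0}$; otherwise I would set $T_{j_0} = \min\bigl\{\wh \nu_{j_0},\, \eps/d(x, \wh x_{j_0})\bigr\}$. In either case this choice satisfies all constraints of the LP, and, since $\wh \nu_{j_0} > 0$ and $\eps > 0$, the resulting objective $\sum_j T_j = T_{j_0}$ is strictly positive, which forces $\nu\opt_\Wass(x) > 0$.

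For the second claim, assuming $\eps \ge \sum_{j \in [N]} d(x, \wh x_j)\wh \nu_j$, I would verify that $T_j = \wh \nu_j$ for every $j \in [N]$ is feasible: nonnegativity and $T_j \le \wh \nu_j$ hold with equality, while the budget constraint $\sum_j d(x, \wh x_j) T_j = \sum_j d(x, \wh x_j)\wh \nu_j \le \eps$ holds by the assumed lower bound on $\eps$. The objective value at this feasible point is $\sum_j \wh \nu_j = 1$. Since $\nu\opt_\Wass(x) \le 1$ for any probability measure, the supremum equals $1$.

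No genuine obstacle is expected here, as both parts amount to a single feasibility check in the explicit LP of Theorem~\ref{thm:Wass}; the only minor care point is the case $d(x, \wh x_{j_0}) = 0$ in the first claim, which is handled by noting that the budget constraint is then vacuous for the component $T_{j_0}$.
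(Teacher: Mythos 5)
Your proposal is correct and matches the paper's intent exactly: the paper omits the proof, noting only that the corollary ``follows directly from examining the optimal value of the linear program~\eqref{eq:Wass}'', and your two feasibility checks (a single positive component $T_{j_0}$ when $\eps>0$, and $T_j=\wh\nu_j$ for all $j$ when $\eps$ exceeds the total transport cost) are precisely that examination. The care you take with the degenerate case $d(x,\wh x_{j_0})=0$ is a welcome addition rather than a deviation.
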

	The proof of Corollary~\ref{corol:Wass:comparative} follows directly from examining the optimal value of the linear program~\eqref{eq:Wass} and is thus omitted.
	
	\subsection{Proofs of Section~\ref{sect:guarantee}}
	
	In the proofs of this section, we denote by $\nu_{i}^{\text{true}}$ the unknown true probability measure that induces the probability mass function $p(\cdot | \theta_i)$ for each $i \in [C]$.

	\begin{proof}[Proof of Theorem~\ref{thm:asymptotic-KL}] Define for each $i \in [C]$ the set
		\[
		\Phi_i \Let \left\{ \nu_i \in \M(\X): \, \KL( \nu_i \parallel \nu_i^{\text{true}}) > \eps_i \right\},
		\]
		where the dependence of $\Phi_i$ on $\eps_i$ and $\nu_i^{\text{true}}$ has been made implicit. Under Assumption~\ref{ass:iid-sampling}, the empirical measure $\wh \nu_i^{N_i}$ satisfies the large deviation principle with rate function $\KL( \cdot \parallel \nu_i^{\text{true}})$ \cite[Theorem~6.2.10]{ref:dembo2010LDP}. Sanov's theorem then implies that
		\be \label{eq:LDP}
		\limsup_{N_i \to \infty} \frac{1}{N_i} \log \PP^\infty \big( \wh \nu_i^{N_i} \in \Phi_i \big) \leq -\eps_i < 0 \qquad \forall i \in [C].
		\ee	
		This in turn implies that there exist positive constants $\kappa_i < \infty$ such that
		\[
			 \PP^{N_i} \big( \wh \nu_i^{N_i} \in \Phi_i \big) \leq \kappa_i \exp( - N_i \eps_i) \quad \text{as } N_i \to \infty.
		\]	
		We now have
		\begin{align}
			\PP^\infty( \mc J^{\text{true}} \geq \wh \J_{\B^N} ) & \ge \PP^\infty \big( \nu_i^{\text{true}} \in \mbb B_{\text{KL}} (\wh \nu_{i}^{N_i}, \eps_i) \; \forall i \in [C]\big) \label{eq:LDP:1} \\
			&= \prod_{i \in [C]} \PP^{N_i} \big( \nu_i^{\text{true}} \in \mbb B_{\text{KL}} (\wh \nu_{i}^{N_i}, \eps_i) \big) \label{eq:LDP:2} \\
			&= \prod_{i \in [C]} \left( 1 - \PP^{N_i} \big( \wh \nu_{i}^{N_i} \in \Phi_i \big) \right) \label{eq:LDP:3} \\
			&\ge 1 - \sum_{i \in [C]} \PP^{N_i} \big( \wh \nu_{i}^{N_i} \in \Phi_i \big). \label{eq:LDP:4}
		\end{align}
		Here, equality~\eqref{eq:LDP:2} follows from our i.i.d.~assumption. Equality~\eqref{eq:LDP:3} follows from the fact that the event $\nu_i^{\text{true}} \in \mbb B_{\text{KL}} (\wh \nu_{i}^{N_i}, \eps_i)$ is the complement of the event $\wh \nu_{i}^{N_i} \in \Phi_i$. Inequality~\eqref{eq:LDP:4}, finally, is due to the Weierstrass product inequality.
		Thus, for each $i$ there exists $C_i < \infty$ such that as $N_i \to \infty$, we have
		\[
			\PP^\infty( \mc J^{\text{true}} < \wh \J_{\B^N} ) \leq \sum_{i \in [C]} \PP^{N_i} \big( \wh \nu_{i}^{N_i} \in \Phi_i \big) \le \sum_{i \in [C]} \kappa_i \exp\big( -  N_i \eps_i\big) \leq \kappa C \exp \big( - n \min_{i \in [C]} \{\eps_i \} \big)
		\]
		for some $\kappa = \max_{i \in [C]} \kappa_i < \infty$. This further implies that
		\[
			\limsup_{n \to \infty} \frac{1}{n} \log \PP^\infty ( \mc J^{\text{true}} < \wh \J_{\B^N} ) \leq -\min_{i \in [C]} \{ \eps_i\} < 0.
		\]
		This observation completes the proof. 
	\end{proof}
	
	\begin{proof}[Proof of Theorem~\ref{thm:finite-Wass}]
		If $\eps_i$ is chosen as in the statement of the theorem, then the measure concentration result for the Wasserstein distance~\cite[Theorem~2]{ref:fournier2015rate} implies that
		\[
		\PP^{N_i} \left( \Wass( \nu_i^{\text{true}}, \wh \nu_i^{N_i}) \geq \eps_i(\beta, C, N_i) \right) \leq \frac{\beta}{C}.
		\]
		Thus, by applying the union bound, we obtain
		\[
		\PP^{N} \left( \Wass( \nu_i^{\text{true}}, \wh \nu_i^{N_i}) \geq \eps_i(\beta, C, N_i) \; \forall i \right) = \sum_i \PP^{N_i} \left( \Wass(\nu_i^{\text{true}}, \wh \nu_i^{N_i}) \geq \eps_i(\beta, C, N_i) \right) \leq \beta,
		\]
		which implies that 
		\[ \PP^{N} \left( \nu_i^{\text{true}} \in \B_\Wass \left(\wh \nu_i^{N_i}, \eps_i(\beta, C, N_i) \right) \, \forall i \right)  \ge 1 - \beta. \]
		We can now conclude that $\wh \J_{\B^N} \leq \mc J^{\text{true}}$ with probability at least $1 - \beta$.
	\end{proof}
	
	\begin{proof}[Proof of Theorem~\ref{thm:asym-Wass}]
		For every $i \in [C]$, let $\nu_i^\star \in \B_i^{N_i}(\wh \nu_i^{N_i})$ be an optimal solution of the problem
		\begin{equation}
		\Sup{ \nu_i \in \B_i^{N_i}(\wh \nu_i^{N_i})} \nu_i(x),
		\end{equation}
		where the dependence of $\nu_i\opt$ on the number of samples $N_i$ has been omitted to avoid clutter. The existence of $\nu_i^\star \in \B_i^{N_i}(\wh \nu_i^{N_i})$ is guaranteed by Proposition~\ref{prop:structure:Wass}.
		By \cite[Lemma 3.7]{ref:esfahani2018data}, for every $i \in [C]$ it holds $( \nu_i^{\text{true}})^\infty$-almost surely that 
		\begin{equation*}
			\lim_{N_i \rightarrow \infty} \Wass \left( \nu_i^{\text{true}} , \nu_i^\star \right) = 0.
		\end{equation*}
		Therefore, by \cite[Theorem~6.9]{ref:villani2008optimal}, $\nu_i^\star$ converges to $\nu_i^{\text{true}}$ weakly as $N_i \rightarrow \infty$. Since $\mathds{1}_x(\cdot)$ is a bounded, upper semicontinuous function, the weak continuity implies that $(\nu_i^{\text{true}})^\infty$-almost surely as $N_i\rightarrow \infty$, we have that
		\begin{equation}\label{eq:nu_star_converge}
		\nu_i^\star (x) \rightarrow \nu_i^{\text{true}}(x) = p(x| \theta_i).
		\end{equation}
		Let $u^{\text{true}} \in [0,1]^C$ be the vector defined by $(u^{\text{true}})_i = p(x| \theta_i)$ for $i \in [C]$. Since $(u^{\text{true}})_i >0 $ for $i = 1,\dots, C$, there exists $ \underline{u} > 0$ such that $u^{\text{true}} \in [\underline{{u}}, 1]^C$.
		Consider the parametrized optimization problems
		\begin{equation*}
			\J^\star (u) \Let \min_{q\in \mathcal{Q}}\left\lbrace \J (q,u) \Let \sum_{i \in[C]} q_i (\log q_i - \log \pi_i ) - \sum_{i \in [C]} q_i \log u_i \right\rbrace,\quad u\in [ \underline{u}, 1 ]^C.
		\end{equation*}
		We observe that $\J (\cdot, \cdot)$ is jointly continuous on $\mc Q\times [ \underline{u}, 1 ]^C$, $\mc Q$ is compact, and the level sets
		\begin{equation*}
			\left\lbrace q\in \mc Q: \J(q,u) \le -\sum_{i \in [C]} \pi_i \log \underline{u} \right\rbrace
		\end{equation*}
		are non-empty and uniformly bounded over all $u \in [ \underline{u}, 1 ]^C$. By \cite[Proposition 4.4]{bonnans2013perturbation} and the discussion following its proof, $\J^\star (u)$ is continuous on $[ \underline{u}, 1 ]^C$. The continuity of $\J^\star( \cdot)$ and the convergence~\eqref{eq:nu_star_converge} together imply that $(\nu_1^{\text{true}})^\infty \times \cdots \times (\nu_C^{\text{true}})^\infty$-almost surely, and we thus have
		\begin{equation*}
			\wh \J_{\B^N} = \J^\star ( (\nu_1^\star (x), \dots, \nu_C^\star (x)) ) \rightarrow \J^\star ( u^{\text{true}} ) = \mc J^{\text{true}} \quad \text{as } N_1,\dots,N_C \rightarrow \infty.
		\end{equation*}
		This observation completes the proof.
	\end{proof}
	
		\newpage
	
	\renewcommand\thesection{Appendix~\Alph{section}}
	\section{Additional Material}
	\renewcommand\thesection{\Alph{section}}
	\setcounter{equation}{0}
	\renewcommand{\theequation}{B.\arabic{equation}}

	\subsection{A Measure-Theoretic Derivation of the Evidence Lower Bound Problem}
	\label{sect:elbo}
	
	To keep the paper self-contained, we present in this section a derivation of the evidence lower bound (ELBO), which is a fundamental building block of the variational Bayes method.
	
	Consider a standard Bayesian inference model where the random vector $\tilde x$, supported on a sample space $\mc X$, is governed by one of the distributions $\mbb P_\theta$ parameterized by $\theta \in \Theta$. We assume that there exists a measure $\bar \nu$ on $\mc X$ such that $\mbb P_\theta$ is absolutely continuous with respect to $\bar \nu$ for all $\theta \in \Theta$. Moreover, we denote by $f_{\tilde x|\theta}$ the Radon-Nikodym derivative of $\mbb P_\theta$ with respect to $\bar \nu$, that is
	\[
		f_{\tilde x | \theta}(x | \theta) = \frac{\mathrm{d} \mathbb P_\theta}{\mathrm{d} \bar \nu}(x) \quad \forall x \in \mc X.
	\]
	Finally, we denote by $\pi$ the prior measure on the parameter space $\Theta$, while $\mbb P_x$ denotes the posterior measure on $\Theta$ after observing $x$.
	
	Consider an optimal solution $\mbb Q\opt$ of the optimization problem
	\[
		\mbb Q\opt \in \arg\Min{ \mbb Q \in \mc Q} \; \KL( \mbb Q  \parallel \mbb P_x ),
	\]
	where $\KL(\cdot \parallel \cdot)$ denotes the KL divergence defined in Definition~\ref{def:KL}. If the feasible set $\mc Q$ is the collection of all possible probability measures supported on $\Theta$, then $\mbb Q\opt= \mbb P_x$. The objective function of this problem can be re-expressed as
	\begin{subequations}
		\begin{align}
		\KL( \mbb Q \parallel \mbb P_x) &= \int_\Theta \log \left( \frac{\mathrm{d} \mbb Q}{\mathrm{d} \mbb P_x} \right) \mathrm{d} \mbb Q \label{eq:elbo:1} \\
		&= \int_\Theta \log \left( \frac{\mathrm{d} \mbb Q}{\mathrm{d} \pi} \right) \mathrm{d} \mbb Q - \int_\Theta \log \left( \frac{\mathrm{d} \mbb P_x}{\mathrm{d} \pi} \right) \mathrm{d} \mbb Q \label{eq:elbo:2} \\
		&= \KL(\mbb Q \parallel \pi) - \int_\Theta \log \left( \frac{\mathrm{d} \mathbb P_\theta}{\mathrm{d} \bar \nu}(x) \right) \mathrm{d} \mbb Q + \log \int_\Theta f_{\tilde x | \theta}(x | \theta) \mathrm{d} \pi, \label{eq:elbo:3}
		\end{align}
	\end{subequations}
	where the equality~\eqref{eq:elbo:1} follows from the definition of KL divergence, and \eqref{eq:elbo:2} is due to the chain rule for the Radon-Nikodym derivatives because $\mbb P_x \ll \pi$ \cite[Theorem~1.31]{ref:schervish1995theory}. Equality \eqref{eq:elbo:3}, finally, holds since
	\[
		\frac{\mathrm{d} \mbb P_x}{\mathrm{d} \pi} (\theta) = \frac{f_{\tilde x | \theta}(x | \theta)}{\int_\Theta f_{\tilde x | \theta}(x | \theta) \mathrm{d} \pi(\theta)} = \frac{1}{\int_\Theta f_{\tilde x | \theta}(x | \theta) \mathrm{d} \pi(\theta)} \cdot \frac{\mathrm{d} \mathbb P_\theta}{\mathrm{d} \bar \nu}(x),
	\]
	where the first equality follows from Bayes' theorem~\cite[Theorem~1.31]{ref:schervish1995theory} and the second equality is due to the definition of $f_{\tilde x | \theta}$. Since the last term in~\eqref{eq:elbo:3} does not involve the decision variable $\mbb Q$, the measure $\mbb Q\opt$ can be equivalently expressed as the optimal solution of
	\[
	 \Min{\mbb Q \in \mc Q} \; \KL(\mbb Q \parallel \pi) - \int_\Theta \log \left( \frac{\mathrm{d} \mathbb P_\theta}{\mathrm{d} \bar \nu}(x) \right) \mathrm{d} \mbb Q.
	 \]
	If we define the conditional density $p(x | \theta)$ with respect to $\bar \nu$ of $\tilde x$ given the parameter $\theta$ \cite[Section~1.3.1]{ref:schervish1995theory}, that is,
	\[
	p(x | \theta) = f_{\tilde x | \theta}(x | \theta),
	\]
	then $\mbb Q\opt$ solves
	\[
	\Min{\mbb Q \in \mc Q} \; \KL(\mbb Q \parallel \pi) - \EE_{\mbb Q} [ \log p(x | \theta)].
	\]	
	The function $p(x | \theta)$, considered as a function of the parameter $\theta$ after $x$ has been observed, is often called the \textit{likelihood function}. If $p(x | \theta)$ is considered as a function of $x$ given the parameter $\theta$, then it is often called the \emph{conditional density}.
	\subsection{Generalization to $f$-Divergence Ambiguity Sets}
	\label{sect:fdiv}
	
	In this section, we consider the class of ambiguity sets described by $f$-divergences, which generalizes the KL ambiguity set from Section~\ref{sect:KL}.
	
	\begin{definition}[$f$-divergence] \label{def:fdiv}
		The $f$-divergence $D_f$ between two measures $\nu_1$ and $\nu_2$ supported on $\mc X$ is defined as
		\[
		D_f(\nu_1 \parallel \nu_2) = \int_{z \in \mc X} f \left( \frac{\nu_1(z)}{\nu_2(z)} \right) \nu_2(z),
		\]
		where $f: \R \to \R$ is a convex function satisfying $f(1) = 0$.
		~More specifically,
		\begin{itemize}
			\item If $f(t) = t\log(t) - t + 1$, then $D_f$ is the \emph{Kullback-Leibler divergence}.
			\item If $f(t) = 1- \sqrt{t}$, then $D_f$ is the \emph{Hellinger distance}.
			\item If $f(t) = (t-1)^2$, then $D_f$ is the \emph{Pearson's $\chi^2$-divergence}.
			\item If $f(t) = | t - 1|$, then $D_f$ is the \emph{total variation distance}.
		\end{itemize}
	\end{definition}
	
	We now consider the $f$-divergence ball $\B_{f}(\wh \nu, \eps)$ of radius $\eps \ge 0$, which contains all probability measures in the neighborhood of $\wh \nu$ as measured by the $f$-divergence:
	\be \label{eq:Df:ball}
	\B_{f}(\wh \nu, \eps) \Let \left\{ \nu \in \mc M(\mc X) : D_f(\wh \nu \parallel  \nu) \leq \eps \right\}
	\ee
	Moreover, we assume that the nominal distribution $\wh \nu$ is supported on $N$ distinct points $\wh x_1, \ldots, \wh x_N$, that is, $\wh \nu = \sum_{j \in [N]} \wh \nu_j \delta_{\wh x_j}$ with $\wh \nu_j > 0\, \forall j \in [N]$ and $\sum_{j \in [N]} \wh \nu_j = 1$.

	In analogy to Section~\ref{sect:KL}, we first provide a generalized version of Proposition~\ref{prop:structure:KL}.
	
	\begin{corollary}[Existence of optimizers; $f$-divergence ambiguity] \label{corol:structure:f}
		For any $\eps \ge 0$ and $x \in \X$, there exists a measure $\nu_f\opt \in \B_{f}(\wh \nu, \eps)$ such that
		\be \label{eq:prob:f}
		\Sup{\nu \in \B_{f}(\wh \nu, \eps)} \, \nu(x) = \nu\opt_f(x).
		\ee
		Moreover, $\nu_f\opt$ is supported on at most $N+1$ points satisfying $\supp(\nu_f\opt)  \subseteq \supp (\wh{\nu}) \cup \{  x \}$.
	\end{corollary}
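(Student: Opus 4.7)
The plan is to follow the blueprint of the proof of Proposition~\ref{prop:structure:KL} essentially verbatim, replacing each invocation of KL-specific identities with analogous properties of a generic $f$-divergence. The case $\eps=0$ is immediate since $\B_{f}(\wh\nu,0)=\{\wh\nu\}$, so I focus on $\eps>0$ throughout.

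The central step will be to show that only measures supported on $\wh{\mc S}\cup\{x\}$, where $\wh{\mc S}=\supp(\wh\nu)$, matter, i.e.,
$$\Sup{\nu\in \B_{f}(\wh\nu,\eps)} \nu(x) \;=\; \Sup{\substack{\nu\in \B_{f}(\wh\nu,\eps)\\ \supp(\nu)\subseteq \wh{\mc S}\cup\{x\}}} \nu(x).$$
As in the KL proof, I would take any $\bar\nu\in\B_{f}(\wh\nu,\eps)$ that places mass $\kappa=\sum_{z\notin\wh{\mc S}\cup\{x\}}\bar\nu(z)>0$ outside $\wh{\mc S}\cup\{x\}$ and exhibit a strictly dominating $\nu'$ with $\supp(\nu')\subseteq\wh{\mc S}\cup\{x\}$. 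When $x\notin\wh{\mc S}$, set $\nu'(\wh x_j)=\bar\nu(\wh x_j)$ for $j\in[N]$ and $\nu'(x)=\bar\nu(x)+\kappa$; splitting the $f$-divergence sum and using that $f(0)$ is a finite real number (since $f:\R\to\R$) yields $D_f(\wh\nu\parallel\nu')=D_f(\wh\nu\parallel\bar\nu)\le\eps$ and $\nu'(x)>\bar\nu(x)$. When $x\in\wh{\mc S}$, say $x=\wh x_N$, I would again mirror the KL argument: without loss of generality $\bar\nu(\wh x_N)\ge\wh\nu_N$, pick $K\in[N-1]$ indices with $\bar\nu(\wh x_k)<\wh\nu_k$, and by continuity of $f$ choose a small $\bar\epsilon\in(0,\kappa)$ so that the increment to the $N$-th term of $D_f$ is compensated by the slack $f(0)\kappa$ released by removing the outside mass, while the $K$ redistributed terms only decrease.

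The single non-trivial structural fact I need beyond the KL proof is that, for any convex $f$ with $f(1)=0$ and $f\ge 0$, the perspective map $t\mapsto t\,f(s/t)$ is non-increasing on $(0,s)$; this follows from its convexity in $t$ together with the fact that it vanishes at $t=s$ and is non-negative everywhere. It controls exactly the $j\in[K]$ terms after redistribution, since each $\nu'(\wh x_j)$ is increased yet stays at most $\wh\nu_j$. Once this reduction to finitely supported measures is in hand, the restricted feasible set $\{\nu\in\M(\X):\supp(\nu)\subseteq\wh{\mc S}\cup\{x\},\,D_f(\wh\nu\parallel\nu)\le\eps\}$ is weakly compact by \cite[Theorem~15.11]{ref:aliprantis06hitchhiker} and nonempty (it contains $\wh\nu$), and $\nu\mapsto\nu(x)$ is upper semicontinuous by Lemma~\ref{lemma:upper}; the Weierstrass maximum value theorem~\cite[Theorem~2.43]{ref:aliprantis06hitchhiker} then produces the desired $\nu_f\opt$, which by construction is supported on at most $N+1$ points of $\wh{\mc S}\cup\{x\}$.

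The main obstacle, such as it is, will be the careful bookkeeping required to ensure that every occurrence of the KL-specific value $f(0)=1$ in the original proof is replaced by the symbolic constant $f(0)$ without breaking any cancellation; in particular, the perspective-monotonicity fact must be explicitly invoked rather than hidden in a closed-form expression. No substantively new idea beyond what already appears in the proof of Proposition~\ref{prop:structure:KL} is required.
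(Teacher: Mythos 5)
Your proposal is correct and takes essentially the same route as the paper, which simply states that the corollary follows from the proof of Proposition~\ref{prop:structure:KL} and omits the details. Your explicit bookkeeping of the constant $f(0)$ (finite by Definition~\ref{def:fdiv}) and the monotonicity of the perspective map $t \mapsto t f(s/t)$ on $(0,s)$ is precisely the adaptation the paper has in mind.
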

	
	The proof of Corollary~\ref{corol:structure:f} follows from the proof of Proposition~\ref{prop:structure:KL} and thus it is omitted.
	
	\begin{theorem}[Optimistic likelihood; $f$-divergence ambiguity] \label{thm:f-divergence}
		Suppose that 
		$	\wh \nu =  \sum_{j \in [N]} \wh \nu_{j} \delta_{\wh x_{j}}.	$ 
		For any data point $x \in \mc X$, the optimization problem in~\eqref{eq:prob:f} can be reformulated as a finite convex program.
		Moreover, if $x \neq \wh x_j$ for all $j \in [N]$, then:
		\begin{enumerate}[leftmargin=5mm]
			\item If $D_f$ is the Hellinger distance, then for any $\eps \in [0, 1]$, we have $\nu_{\text{Hellinger}}\opt(x) = 1 - \left( 1  - \eps \right)^2$.
			\item If $D_f$ is the Pearson's $\chi^2$-divergence, then for any $\eps \in \R_+$, we have $\nu_{\chi^2}\opt (x) = 1 - \left(1 +\eps \right)^{-1}$.
			\item If $D_f$ is the total variation distance, then for any $\eps \in \R_+$, we have $\nu_{\text{TV}}\opt(x) = \eps/2$.
		\end{enumerate}
	\end{theorem}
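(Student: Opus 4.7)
The plan is to mirror the approach in the proof of Theorem~\ref{thm:KL-divergence}, using Corollary~\ref{corol:structure:f} to reduce the infinite-dimensional problem~\eqref{eq:prob:f} to a finite-dimensional convex program. Since an optimizer $\nu\opt_f$ is supported on $\supp(\wh\nu) \cup \{x\}$, I would parametrize any relevant $\nu$ as $\nu = \sum_{j \in [N]} y_j \delta_{\wh x_j}$ when $x \in \supp(\wh\nu)$, and as $\nu = \sum_{j \in [N]} y_j \delta_{\wh x_j} + (1 - \sum_j y_j)\delta_x$ when $x \notin \supp(\wh\nu)$, with $y \in \R^N_+$ satisfying $\sum_j y_j = 1$ in the first case and $\sum_j y_j \leq 1$ in the second. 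Substituting into Definition~\ref{def:fdiv} rewrites~\eqref{eq:prob:f} as a finite program whose objective is linear and whose divergence constraint
\[
	\sum_{j \in [N]} y_j f(\wh\nu_j / y_j) \;+\; \Big(1 - \sum_{j \in [N]} y_j\Big) f(0) \;\leq\; \eps
\]
(drop the last term in the first case) is convex in $y$ because $(s, y) \mapsto y f(s/y)$ is the perspective of the convex function $f$.

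For the closed-form results I would restrict to $x \notin \supp(\wh\nu)$ and set $s \Let \sum_{j \in [N]} y_j$, so that maximizing the objective amounts to minimizing $s$. For the Hellinger distance, $f(t) = 1 - \sqrt{t}$ collapses the constraint to $1 - \sum_j \sqrt{\wh\nu_j y_j} \leq \eps$; Cauchy--Schwarz gives $\sum_j \sqrt{\wh\nu_j y_j} \leq \sqrt{\sum_j \wh\nu_j}\sqrt{s} = \sqrt{s}$, so feasibility forces $\sqrt{s} \geq 1 - \eps$, hence $s \geq (1-\eps)^2$, attained at $y_j = (1-\eps)^2\wh\nu_j$. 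For Pearson's $\chi^2$-divergence, $f(t) = (t-1)^2$ collapses the constraint to $\sum_j \wh\nu_j^2/y_j \leq 1 + \eps$, and applying Cauchy--Schwarz via the factorization $\wh\nu_j = \sqrt{\wh\nu_j^2/y_j}\sqrt{y_j}$ yields $1 = (\sum_j \wh\nu_j)^2 \leq (\sum_j \wh\nu_j^2/y_j)\, s \leq (1+\eps)s$, so $s \geq 1/(1+\eps)$, attained at $y_j = \wh\nu_j/(1+\eps)$.

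For the total variation distance, $f(t) = |t-1|$ reduces the constraint to $\sum_j |y_j - \wh\nu_j| + (1 - s) \leq \eps$; since $\sum_j \wh\nu_j = 1$ and $s \leq 1$, the triangle inequality gives $\sum_j |y_j - \wh\nu_j| \geq |1 - s| = 1 - s$, and the constraint then forces $2(1-s) \leq \eps$, with equality attained at $y_j = (1 - \eps/2)\wh\nu_j$ whenever $\eps \leq 2$. The main obstacle I anticipate is not any single calculation but the bookkeeping of boundary cases (e.g., $\eps = 0$ for Hellinger, or values of $\eps$ large enough that the candidate minimizer $y$ would violate positivity or the simplex constraint), where a continuity argument for the optimal value as a function of $\eps$ akin to the one at the end of the proof of Theorem~\ref{thm:KL-divergence} will be needed to handle the endpoints.
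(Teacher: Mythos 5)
Your proposal is correct, and for the closed-form part it takes a genuinely different route from the paper. The reduction of~\eqref{eq:prob:f} to a finite convex program is the same in both (restrict to measures supported on $\supp(\wh\nu)\cup\{x\}$ via Corollary~\ref{corol:structure:f}, and note that the divergence constraint is convex because $y_j \mapsto y_j f(\wh\nu_j/y_j)$ is a perspective-type function), and both reduce the closed-form question to minimizing $s=\sum_j y_j$ over the simplified constraint. But where the paper derives the lower bound $s\ge(1-\eps)^2$ (Hellinger) and $s\ge(1+\eps)^{-1}$ (Pearson) by forming the Lagrangian, invoking strong duality under a Slater condition, restricting the dual variable to a bounded interval, and solving the inner minimization analytically, you obtain the same bounds in one line from Cauchy--Schwarz applied to $\sum_j\sqrt{\wh\nu_j}\sqrt{y_j}$ and to $\sum_j \wh\nu_j = \sum_j\sqrt{\wh\nu_j^2/y_j}\sqrt{y_j}$, with equality exactly at $y_j\propto\wh\nu_j$; the total-variation case is handled by the same triangle-inequality argument in both. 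Your route is more elementary and buys something concrete: it does not require $\eps>0$ for a Slater condition, so the endpoint $\eps=0$ is covered directly and the continuity argument via Berge's maximum theorem that the paper (and you, cautiously) invoke for the boundary becomes unnecessary for Hellinger and Pearson. You are also more careful than the paper on total variation, where your restriction $\eps\le 2$ is genuinely needed for the stated formula $\nu\opt_{\text{TV}}(x)=\eps/2$ to be attainable (for $\eps>2$ the value saturates at $1$, a caveat the theorem statement omits). The one point to tighten is the implicit requirement $y_j>0$ for Pearson's $\chi^2$ (so that $\wh\nu_j^2/y_j$ is finite), which your candidate optimizers satisfy in any case.
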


	\begin{proof}[Proof of Theorem~\ref{thm:f-divergence}]
		The reformulation as a convex program follows directly from the first part of the proof of Theorem~\ref{thm:KL-divergence} using the general function $f$, and it is thus omitted. We now proceed to consider the case when $x \not \in \wh{\mc S}$, and we derive the optimal value $\nu_f\opt(x)$ for each divergence $f$.
		\begin{enumerate}[leftmargin=5mm]
			\item \textbf{Hellinger distance.} Following the same approach as in the proof of Theorem~\ref{thm:KL-divergence}, we employ the definition of the Hellinger distance to obtain the equivalent minimization problem
			\begin{align*}
				\text{OPT}_{\text{Hellinger}}\opt = \Min{y \in \Delta_N} \left\{  \sum_{j \in [N]} y_j : \sum_{j \in [N]} \wh \nu_j - \sum_{j \in [N]} \sqrt{\wh \nu_j} \sqrt{y_j} \leq \eps \right\}.
			\end{align*}
			Suppose that $\eps \in (0, 1]$. Using a duality argument, we have
			\begin{align*}
				\text{OPT}_{\text{Hellinger}}\opt =&\Min{y \in \Delta_N} \Max{\dualvar \ge 0} \left\{  \sum_{j \in [N]} y_j + \dualvar \left( \sum_{j \in [N]} \wh \nu_j - \sum_{j \in [N]} \sqrt{\wh \nu_j} \sqrt{y_j} - \eps \right) \right\}\\
				=&\max_{\dualvar \ge 0} \left\{ \dualvar \left(\sum_{j \in [N]} \wh \nu_j - \eps \right) + \Min{y \in \Delta_N} \left\{  \sum_{j \in [N]} y_j   - \dualvar \sum_{j \in [N]}  \sqrt{\wh \nu_j} \sqrt{y_j}  \right\} \right\} \\
				\ge&\sup_{2 \ge \dualvar > 0} \left\{\dualvar \left(\sum_{j \in [N]} \wh \nu_j - \eps \right) + \Min{y \in \Delta_N} \left\{  \sum_{j \in [N]} y_j   - \dualvar \sum_{j \in [N]} \sqrt{\wh \nu_j} \sqrt{y_j}   \right\} \right\} \\
				=&\sup_{2 \ge \dualvar > 0} \left\{ \dualvar \left( \sum_{j \in [N]} \wh \nu_j - \eps \right) - \frac{\dualvar^2}{4} \sum_{j \in [N]} \wh \nu_j \right\},
			\end{align*}
			where we have used the optimal solution $y_j\opt = \dualvar^2 \wh \nu_j/4$ to arrive at the last equation. The supremum over $\dualvar$ admits the optimal solution $\dualvar\opt = 2 \left( 1 - \eps \right)$.
			We can thus show that
			\[
			OPT_{\text{Hellinger}}\opt \ge  \left( 1 - \eps \right)^2 \quad \forall \eps \in (0, 1].
			\]
			The rest of the proof is analogous to the proof of Theorem~\ref{thm:KL-divergence}.
			
			\item \textbf{Pearson's $\chi^2$-divergence.} By definition of the divergence, we obtain
			\begin{align*}
				\text{OPT}_{\chi^2}\opt = \Min{y \in \Delta_N} \left\{  \sum_{j \in [N]} y_j : \sum_{j \in [N]} \wh \nu_j^2 y_j^{-1} -  \sum_{j \in [N]}\wh \nu_j \leq \eps \right\}.
			\end{align*}
			Suppose that $\eps > 0$. Using a duality argument, we have
			\begin{align*}
				\text{OPT}_{\chi^2}\opt =&\Min{y \in \Delta_N} \max_{\dualvar \ge 0} \left\{  \sum_{j \in [N]} y_j + \dualvar \left( \sum_{j \in [N]} \wh \nu_j^2 y_j^{-1} -  \sum_{j \in [N]} \wh \nu_j - \eps \right) \right\}\\
				=&\max_{\dualvar \ge 0} \left\{ -\dualvar \left( \sum_{j \in [N]} \wh \nu_j + \eps \right) + \Min{y \in \Delta_N} \left\{  \sum_{j \in [N]} y_j   + \dualvar \sum_{j \in [N]} \wh \nu_j^2 y_j^{-1}  \right\} \right\} \\
				\ge&\sup_{1 \ge \dualvar > 0} \left\{-\dualvar \left( \sum_{j \in [N]} \wh \nu_j + \eps \right) + \Min{y \in \Delta_N} \left\{    \sum_{j \in [N]} y_j   + \dualvar \sum_{j \in [N]} \wh \nu_j^2 y_j^{-1}  \right\} \right\} \\
				=&\sup_{1 \ge \dualvar > 0} \left\{ -\dualvar \left( \sum_{j \in [N]} \wh \nu_j + \eps \right) +2 \sqrt{\dualvar} \sum_{j \in [N]} \wh \nu_j \right\},
			\end{align*}
			where we have used the optimal solution $y_j\opt = \sqrt{\dualvar} \wh \nu_j$ to arrive at the last equation. The supremum over $\dualvar$ admits the optimal solution $\dualvar\opt = \left(1 + \eps \right)^{-2}$, 
			which implies that
			\[
			\text{OPT}_{\chi^2}\opt \ge \left(1+\eps\right)^{-1} 	\quad \forall \eps > 0.
			\]
			The rest of the proof is analogous to the proof of Theorem~\ref{thm:KL-divergence}.
			\item \textbf{Total variation distance.} We have 
			\begin{align*}
				\text{OPT}_{\text{TV}}\opt = \Min{y \in \Delta_N} \left\{  \sum_{j \in [N]} y_j : \sum_{j \in [N]} | \wh \nu_j - y_j | + 1 -  \sum_{j \in [N]} y_j \leq \eps  \right\}.
			\end{align*}
			For any $\eps \ge 0$, the optimal solution $y\opt$ satisfies $y_j\opt \leq \wh \nu_j$, and thus we have
			\begin{align*}
				\text{OPT}_{\text{TV}}\opt &=\Min{y \in \Delta_N} \left\{  \sum_{j \in [N]} y_j : \sum_{j \in [N]} ( \wh \nu_j - y_j) + 1 -  \sum_{j \in [N]} y_j \leq \eps  \right\} \\
				&=\Min{y \in \Delta_N} \left\{  \sum_{j \in [N]} y_j : 2 -  2 \sum_{j \in [N]} y_j \leq \eps  \right\} = 1 - \frac{\eps}{2},
			\end{align*}
			which finishes the proof for the total variation distance.
		\end{enumerate}
	These observations complete the proof.
	\end{proof}
	
	\subsection{Comparison of Moment and Wasserstein Ambiguity Sets}
	\label{sect:moment-vs-Wass}
	
	In this section, we empirically demonstrate that the approximation using the Wasserstein ambiguity set can capture the tail behavior of the nominal distribution $\wh \nu$ better than the approximation using the moment ambiguity set. To this end, consider the two univariate discrete nominal measures \[\wh \nu^{(1)} = \half \delta_{-1} + \half \delta_1 \qquad \text{ and } \qquad  \wh \nu^{(2)} = 0.1 \delta_{-2} + 0.4 \delta_{-\half} + 0.4 \delta_{\half} + 0.1 \delta_{2}.\] Notice that both $\wh \nu^{(1)}$ and $\wh \nu^{(2)}$ share the same mean $0$ and the same variance $1$, and thus we find that
	\begin{equation}\notag
	\Sup{\nu \in \B_{\text{MV}}(\wh \nu^{(1)})} \nu(x) = \Sup{\nu \in \B_{\text{MV}}(\wh \nu^{(2)})} \nu(x) \qquad \forall x \in \X.
	\end{equation}
	However, if we use the Wasserstein ambiguity set $\B_\Wass(\cdot)$, then in general we have
	\begin{equation}\notag
		\Sup{\nu \in \B_{\Wass}(\wh \nu^{(1)}, \eps)} \nu(x) \neq \Sup{\nu \in \B_{\Wass}(\wh \nu^{(2)}, \eps)} \nu(x).
	\end{equation}

	\begin{figure*}[tb]
		\centering
		\subfigure[Nominal measure $\wh \nu^{(1)}$]{\label{fig:algorithm:a} \includegraphics[width=0.48\columnwidth]{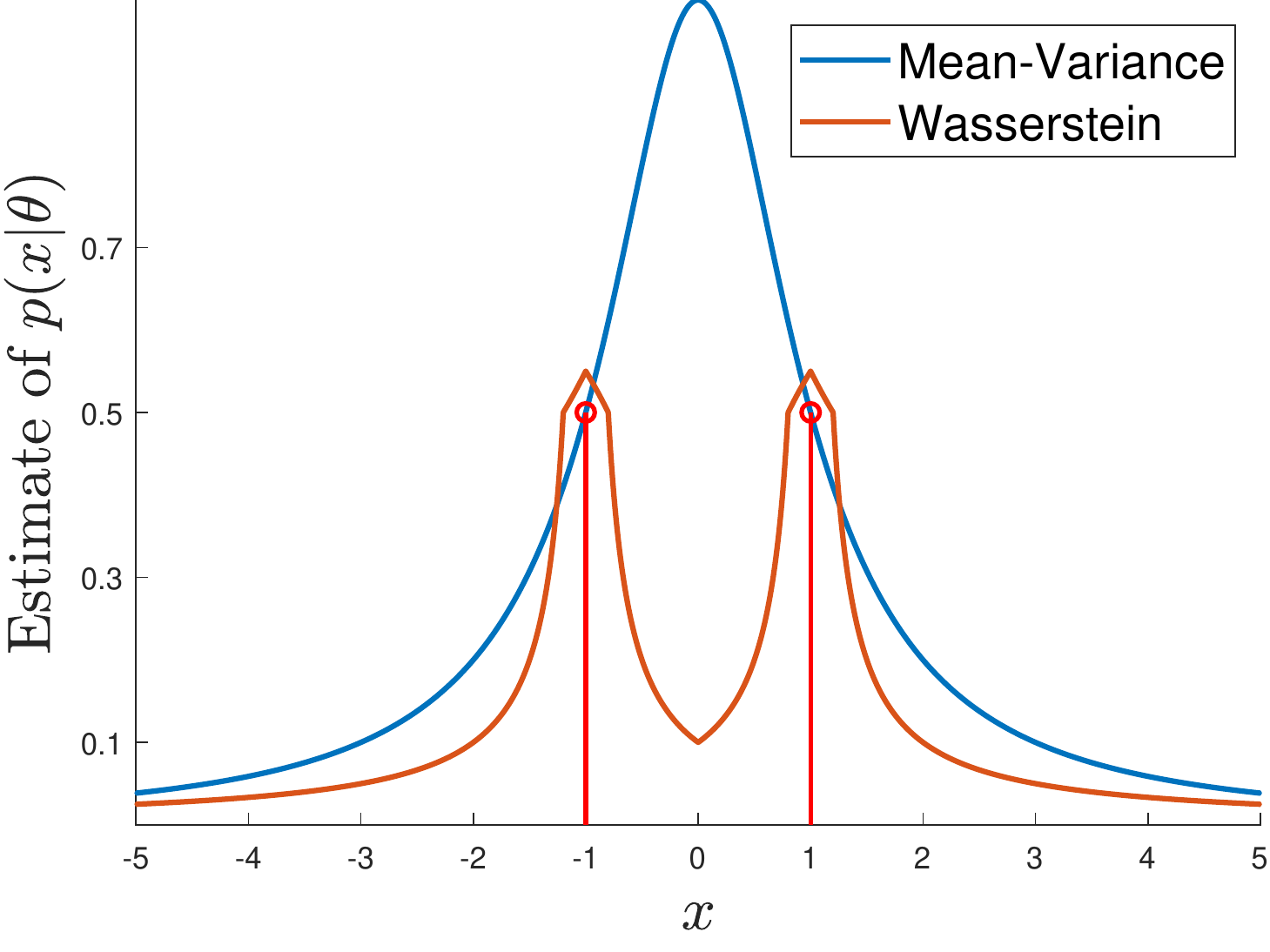}} \hspace{0pt}
		\subfigure[Nominal measure $\wh \nu^{(2)}$]{\label{fig:algorithm:b} \includegraphics[width=0.48\columnwidth]{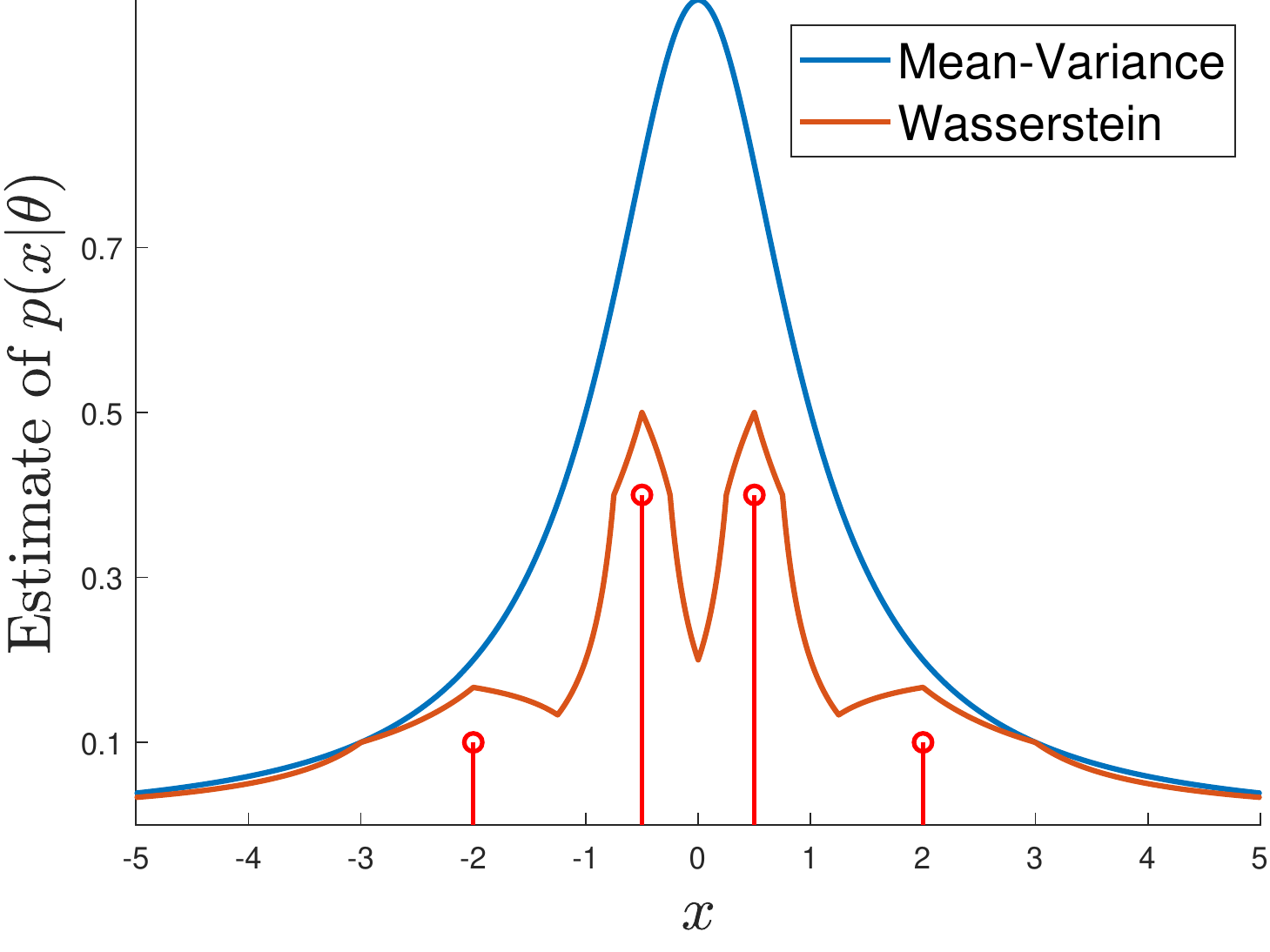}} \hspace{0pt}
		\caption{Approximations of the likelihood $p(x | \theta)$ under two different nominal measures. The approximation offered by the mean-variance ambiguity set is the same for both $\wh \nu^{(1)}$ and $\wh \nu^{(2)}$. In contrast, the approximation offered by the Wasserstein ambiguity set produces a fatter tail under the nominal measure $\wh \nu^{(2)}$, whose support is more spread out.}
		\label{fig:MV-vs-Wass}
	\end{figure*}

	Figure~\ref{fig:MV-vs-Wass} illustrates the approximations $p(x | \theta)$ offered by the optimal value of the optimistic likelihood problem~\eqref{eq:likelihood} over these two ambiguity sets.	If we choose $\wh \nu^{(2)}$ as the nominal measure, we would expect the true distribution $p(\cdot | \theta)$ to be more spread out than when we choose $\wh \nu^{(1)}$. Nevertheless, this structural information is discarded by the moment ambiguity set, and the optimal value of the optimistic likelihood problem is the same for $\wh \nu^{(1)}$ and $\wh \nu^{(2)}$. In contrast, the Wasserstein ambiguity set produces a fatter tail under the nominal measure $\wh \nu^{(2)}$ than under $\wh \nu^{(1)}$, which better reflects the information contained in the nominal distribution.
	
	Interestingly, if $x= 0$, then we have 
	\[
	\Sup{\nu \in \B_{\text{MV}}(\wh \nu^{(1)})} \nu(0) = \Sup{\nu \in \B_{\text{MV}}(\wh \nu^{(2)})} \nu(0) = 1.
	\]
	Indeed, consider the family of discrete measures $\{ \nu_k\}_{k \in \mbb N_+}$ defined as
	\[
	\nu_k = \left(1 - \frac{1}{k^2} \right) \delta_0 + \frac{1}{2k^2} \left( \delta_k + \delta_{-k} \right) \qquad \forall k \in \mbb N_+.
	\]
	By construction, $\nu_k$ has mean 0 and variance $1$, and thus $\{\nu_k \}_{k \in \mbb N_+}$ belong to $ \B_{\text{MV}}(\wh \nu^{(1)})$ and attain the optimal value of 1 asymptotically.

	\subsection{Approximation of the Log-Likelihood for Multiple Observations}
	\label{sect:multiple}
	
	In many cases, the update of the posterior is carried out after observing a batch of $L$ i.i.d.~samples $x_1^L \Let \{x_1, \ldots, x_L\}$. In this case, the log-likelihood of the data $x_1^L$ can be written as
	\[
	\log p( x_1^L | \theta) = \log \prod_{ \ell \in [L]} p(x_\ell | \theta) = \sum_{\ell \in [L]} \log p(x_\ell | \theta).
	\]
	When $p(\cdot | \theta)$ is intractable, we propose the optimistic log-likelihood approximation
	\be \label{eq:likelihood:multi}
	\log p( x_1^L | \theta) \approx \Sup{\nu \in \B_\theta(\wh \nu_\theta)} \; \sum_{\ell \in [L]} \log \nu(x_\ell)
	\ee
	for some ambiguity set $\B_\theta(\wh \nu_\theta)$ defined below. Note that the optimistic log-likelihood approximation~\eqref{eq:likelihood:multi} follows the spirit of the optimistic likelihood approximation~\eqref{eq:likelihood}.
	
	Because the $\log$ function attains $-\infty$ at 0, we need to restrict ourselves to a subset of $\M(\X)$ over which the objective function of~\eqref{eq:likelihood:multi} is well-defined. For any batch data $x_1^L$, we denote by $\M_{x_1^L}(\X)$ the set of measures supported on $\X$ with positive mass at any $x_\ell \in x_1^L$, that is, 
	\[
	\M_{x_1^L}(\X) = \left\{ \nu \in \M(\X): \nu(x_\ell) > 0 \; \forall \ell \in [L]  \right\}.
	\]
	
	We first establish the upper semicontinuity of the objective function in~\eqref{eq:likelihood:multi}.
	
	\begin{lemma}[Upper semicontinuity] \label{lemma:upper:2}
		For any batch data $x_1^L$, the functional $G(\nu) = \sum_{\ell \in [L]} \log \nu(x_\ell)$ is upper semicontinuous over $\M_{x_1^L}(\X)$.
	\end{lemma}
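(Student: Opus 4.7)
The plan is to decompose $G$ into a finite sum of compositions of the point-evaluation functional with the logarithm, and then transfer upper semicontinuity through each step using Lemma~\ref{lemma:upper}.

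First I would fix a sequence $\{\nu_k\}_{k \in \mathbb{N}} \subset \M_{x_1^L}(\X)$ that converges weakly to some $\nu \in \M_{x_1^L}(\X)$. Applying Lemma~\ref{lemma:upper} to each atom $x_\ell$ yields $\limsup_{k \to \infty} \nu_k(x_\ell) \le \nu(x_\ell)$ for every $\ell \in [L]$.

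Next I would push this inequality through the logarithm. Extending $\log$ to $[0,\infty]$ via $\log 0 \defeq -\infty$ turns it into a continuous and monotone non-decreasing function on the extended real line. A standard argument---pass to a subsequence realising the limsup of $\nu_k(x_\ell)$, then use continuity of $\log$ and monotonicity to push limits through---gives $\limsup_k \log a_k = \log(\limsup_k a_k)$ for any sequence $\{a_k\} \subset [0,\infty]$. Specialising to $a_k = \nu_k(x_\ell)$ and using monotonicity of $\log$ delivers $\limsup_k \log \nu_k(x_\ell) \le \log \nu(x_\ell)$ for each $\ell$.

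Finally, summing over $\ell \in [L]$ and using the subadditivity $\limsup_k \sum_\ell b^\ell_k \le \sum_\ell \limsup_k b^\ell_k$ for finite sums yields $\limsup_k G(\nu_k) \le G(\nu)$, which is the claimed upper semicontinuity. No step is technically demanding; the only subtlety is handling the case in which $\nu_k(x_\ell) \to 0$ along a subsequence while $\nu(x_\ell) > 0$, but the extension $\log 0 = -\infty$ absorbs this cleanly (the left-hand side is then $-\infty$, making the bound trivial), and the restriction to $\M_{x_1^L}(\X)$ ensures the right-hand side $G(\nu)$ is a finite real number so no $\infty - \infty$ pathology arises when adding the finitely many limsups.
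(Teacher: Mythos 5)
Your proof is correct and follows essentially the same route as the paper: apply Lemma~\ref{lemma:upper} atomwise, pass the bound through the monotone continuous logarithm, and combine over the finite sum. In fact you are slightly more careful than the paper, which asserts $\limsup_k \sum_\ell \log\nu_k(x_\ell) = \sum_\ell \log(\limsup_k \nu_k(x_\ell))$ as an equality where only the subadditive inequality $\le$ holds in general---your version is the cleaner one, and your handling of the $\log 0 = -\infty$ edge case is a worthwhile addition.
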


	\begin{proof}
		Let $\{\nu_k\}_{k \in \mbb{N}_+}$ be a sequence of probability measures in $\M_{x_1^L}(\X)$ converging weakly to $\nu \in \M_{x_1^L}(\X)$. We have
		\begin{equation*}
			\limsup_{k\rightarrow \infty} G (\nu_k) = \limsup_{k\rightarrow \infty} \sum_{\ell \in [L]} \log \nu_k(x_\ell) = \sum_{\ell \in [L]} \log \left( \limsup_{k\rightarrow \infty}  \nu_k(x_\ell) \right) \leq \sum_{\ell \in [L]} \log \nu(x_\ell) = G (\nu),
		\end{equation*}
		where the first  and last equalities are from the definition of $G$, the second equality is from the continuity of the $\log$ function over $\M_{x_1^L}(\X)$, and the inequality is due to the upper semicontinuity of the function $F(\nu) = \nu(x)$ established in Lemma~\ref{lemma:upper}. This completes the proof.
	\end{proof}
	
	Given batch data $x_1^L$, we now consider the Wasserstein ambiguity set centered at the nominal distribution $\wh \nu$,
	\[
		\B_\Wass(\wh \nu, \eps) = \{ \nu \in \M_{x_1^L}(\X) : \Wass(\nu, \wh \nu) \le \eps \},
	\]
	where the dependence on $\theta$ and $x_1^L$ has been made implicit to avoid clutter.
	
	\begin{theorem}[Optimistic log-likelihood; Wasserstein ambiguity] \label{thm:Wass:2}
		Suppose that Assumption~\ref{ass:nominal} holds. For any batch data $x_1^L$ and radius $\eps > 0$, the optimistic log-likelihood problem~\eqref{eq:likelihood:multi} under the Wasserstein ball $\B_\Wass(\wh \nu, \eps)$ is equivalent to the finite convex program
		\be \label{eq:Wass:multi}
		\Sup{ \nu \in \B_\Wass(\wh \nu, \eps)} \; \sum_{\ell \in [L]} \log \nu(x) = 
		\left\{
		\begin{array}{cl}
			\max & \ds \sum_{\ell \in [L]} \log \left(\sum_{j \in [N]} T_{j \ell } \right) \\
			\st & T \in \R^{N \times L}_+, \, \ds \sum_{\substack{j \in [N] \\ \ell \in [L]}} d( \wh x_{j} , x_\ell ) T_{j \ell } \leq \eps \\
			&  \ds \sum_{\ell \in [L]} T_{ j \ell } \leq \wh \nu_{j} \qquad \forall j \in [N].
		\end{array}
		\right.
		\ee
	\end{theorem}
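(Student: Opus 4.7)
The plan is to adapt the proof of Theorem~\ref{thm:Wass} to the multi-observation, log-sum setting, following the same two-stage strategy: first establish a structural result limiting the support of any optimizer, then parameterize by a transport plan and reduce to the convex program~\eqref{eq:Wass:multi}.

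For the structural result, I would mirror the argument in Proposition~\ref{prop:structure:Wass}. Since $\eps > 0$, the Wasserstein ball $\B_\Wass(\wh \nu, \eps)$ is weakly compact (by the reference used in the proof of Proposition~\ref{prop:structure:Wass}), and its intersection with $\M_{x_1^L}(\X)$ is non-empty. Combined with the upper semicontinuity from Lemma~\ref{lemma:upper:2}, a version of the Weierstrass maximum value theorem yields an optimizer $\bar \nu$. Given any candidate that places positive mass outside $\supp(\wh \nu) \cup \{x_1, \ldots, x_L\}$, I would take an optimal transport plan $\bar \lambda$ from $\wh \nu$ to $\bar \nu$ and construct a new plan $\lambda'$ that redirects this stray mass to (say) the nearest $x_\ell$, using the same $\min\{1, \eta_j/d(\cdot,\cdot)\}$ clipping as in the single-observation case to control the transport cost. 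Since the objective $\sum_\ell \log \nu(x_\ell)$ is strictly increasing in each $\nu(x_\ell)$, the redirection strictly increases it, so any optimizer must already be supported on $\supp(\wh \nu) \cup \{x_1, \ldots, x_L\}$.

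For the convex program reduction, I would replicate the linearization from the proof of Theorem~\ref{thm:Wass}. Parameterize transport plans by variables $\lambda_{jk}$ where the target index $k$ ranges over $\supp(\wh \nu) \cup \{x_1, \ldots, x_L\}$, and observe that flows between two points of $\supp(\wh \nu)$ only consume budget without affecting any $\nu(x_\ell)$, so they may be set to zero at no loss. Letting $T_{j\ell}$ denote the mass sent from $\wh x_j$ to $x_\ell$, the marginal constraint at $\wh x_j$ becomes $\sum_\ell T_{j\ell} \le \wh \nu_j$, the transport-cost constraint becomes the single budget inequality displayed in~\eqref{eq:Wass:multi}, and $\nu(x_\ell) = \sum_j T_{j\ell}$. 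Substituting these into the objective produces the program~\eqref{eq:Wass:multi}, which is convex because $T \mapsto \sum_\ell \log(\sum_j T_{j\ell})$ is concave over the non-negative orthant.

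The main obstacle is keeping the log-sum objective finite throughout the structural argument, since any intermediate measure with $\nu(x_\ell) = 0$ for some $\ell$ drives the objective to $-\infty$. To handle this I would work exclusively within $\M_{x_1^L}(\X)$ and, if needed, allocate an arbitrarily small portion of $\eps$ to a preliminary perturbation placing strictly positive mass on each $x_\ell$ before performing the mass redirection with the remaining budget. A minor technical subtlety is the possible coincidence of some $x_\ell$ with some $\wh x_j$: because $d(\wh x_j, x_\ell) = 0$ in that case the corresponding $T_{j\ell}$ is free under the budget constraint, so the reformulation carries over without modification.
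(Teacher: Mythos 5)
Your proposal follows essentially the same two-stage route as the paper's proof: the paper likewise combines the strict monotonicity of the logarithm with the mass-redirection argument of Proposition~\ref{prop:structure:Wass} (together with Lemma~\ref{lemma:upper:2} and weak compactness of the Wasserstein ball) to restrict the support to $\supp(\wh\nu)\cup x_1^L$, and then writes the problem over transport plans, zeroes out the flows between atoms of $\wh\nu$, and sets $T_{j\ell}=\lambda_{j(N+\ell)}$ to obtain~\eqref{eq:Wass:multi}. Your extra care about keeping the objective finite on $\M_{x_1^L}(\X)$ and about the case $x_\ell=\wh x_j$ addresses details the paper explicitly omits for brevity, so the argument is correct and matches the intended proof.
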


	\begin{proof}
		We first combine the fact that the logarithm is strictly increasing with the proof of Proposition~\ref{prop:structure:Wass} to show that there is an optimal measure $\nu\opt_\Wass$ that is supported on $\supp(\nu\opt_\Wass) \subseteq \supp (\wh{\nu}) \cup x_1^L$, a finite set of cardinality $N+L$. Notice that the existence of this optimal measure is guaranteed by the upper semicontinuity of the objective function established in Lemma~\ref{lemma:upper:2} and the weak compactness of $ \B_\Wass(\wh \nu, \eps)$ established in~\cite[Proposition~3]{ref:pichler2018quantitative}. The details of this step are omitted for brevity.
		
		Since the optimal measure is supported on $\supp (\wh{\nu}) \cup x_1^L$, it suffices to consider measures of the form
		\[
		\nu = \sum_{j \in [N]} y_j \delta_{\wh x_{j}} + \sum_{\ell \in [L]} z_\ell \delta_{x_\ell}
		\]
		for some $y \in \R^N_+$, $z \in \R^L_+$ satisfying $\sum_{j \in [N]} y_j + \sum_{\ell \in [L]} z_\ell = 1$. Using the Definition~\ref{def:wasserstein} of the type-1 Wasserstein distance, we can rewrite the optimistic log-likelihood problem over the Wasserstein ball $\B_\Wass(\wh \nu, \eps)$ as the convex program
		\[
		\begin{array}{cll}
		\sup & \ds \sum_{\ell \in [L]} \log (z_\ell) \\
		\st & y \in \R^N_+, \; z \in \R^L_+, \; \lambda \in \R_+^{N \times {(N+L)}} \\
		& \ds \sum_{j \in [N]} \sum_{j' \in [N]} d( \wh x_{j} , \wh x_{j'} ) \lambda_{jj'} +\sum_{j \in [N]} \sum_{\ell \in [L]}  d( \wh x_{j} , x_\ell ) \lambda_{j(N+\ell)}  \leq \eps \\
		& \ds \sum_{j' \in [N+L]} \lambda_{jj'} = \wh \nu_j & \forall j \in [N]\\
		& \ds \sum_{j \in [N]} \lambda_{jj'} = y_j  & \forall j' \in [N] \\
		& \ds \sum_{j \in [N]} \lambda_{jj'} = z_{j' - N} & \forall j' \in [N+L] \backslash [N] \\
		& \sum_{j \in [N]} y_j + \sum_{\ell \in [L]} z_\ell = 1.
		\end{array}
		\] 
		By letting $T_{j\ell} = \lambda_{j(N + \ell)}$ and eliminating the redundant components of $\lambda$, we obtain the desired reformulation. This completes the proof.
	\end{proof}

\bibliographystyle{abbrv}
\bibliography{bibliography}

\end{document}